\theoremstyle{plain}
\newtheorem{theorem}{theorem}[section]
\newtheorem{lemma}[theorem]{lemma}
\theoremstyle{remark}
\newtheorem{definition}[theorem]{definition}
\newtheorem{remark}[theorem]{remark}
\newtheorem{assumption}[theorem]{assumption}
\useunder{\uline}{\ul}{}
\DeclareMathOperator*{\argmin}{arg\,min}
\newcommand{\R}{{\mathbb{R}}}
\newcommand{\E}{{\mathbb{E}}}
\newcommand{\D}{{\mathbb{D}}}
\newcommand{\norm}[1]{{\left\vert\kern-0.25ex\left\vert\kern-0.25ex\left\vert #1
\right\vert\kern-0.25ex\right\vert\kern-0.25ex\right\vert}}
\begin{document}

\begin{frontmatter}
\title{Sparse deep neural networks for nonparametric estimation in high-dimensional sparse regression}
%\title{A sample article title with some additional note\thanksref{t1}}
\runtitle{Sparse deep neural networks for nonparametric estimation}
%\thankstext{T1}{A sample additional note to the title.}

\begin{aug}
%%%%%%%%%%%%%%%%%%%%%%%%%%%%%%%%%%%%%%%%%%%%%%%
%% Only one address is permitted per author. %%
%% Only division, organization and e-mail is %%
%% included in the address.                  %%
%% Additional information can be included in %%
%% the Acknowledgments section if necessary. %%
%% ORCID can be inserted by command:         %%
%% \orcid{0000-0000-0000-0000}               %%
%%%%%%%%%%%%%%%%%%%%%%%%%%%%%%%%%%%%%%%%%%%%%%%
\author[A]{\fnms{Dongya}~\snm{Wu}\ead[label=e1]{wudongya@nwu.edu.cn}},
\author[B]{\fnms{Xin}~\snm{Li}\ead[label=e2]{lixin@nwu.edu.cn}}
%%%%%%%%%%%%%%%%%%%%%%%%%%%%%%%%%%%%%%%%%%%%%%
%% Addresses                                %%
%%%%%%%%%%%%%%%%%%%%%%%%%%%%%%%%%%%%%%%%%%%%%%
\address[A]{School of Information Science and Technology, Northwest University, Xi'an 710127, China\printead[presep={,\ }]{e1}}

\address[B]{School of Mathematics, Northwest University, Xi'an 710127, China\printead[presep={,\ }]{e2}}
\end{aug}

\begin{abstract}
Generalization theory has been established for sparse deep neural networks under high-dimensional regime. Beyond generalization, parameter estimation is also important since it is crucial for variable selection and interpretability of deep neural networks. Current theoretical studies concerning parameter estimation mainly focus on two-layer neural networks, which is due to the fact that the convergence of parameter estimation heavily relies on the regularity of the Hessian matrix, while the Hessian matrix of deep neural networks is highly singular. To avoid the unidentifiability of deep neural networks in parameter estimation, we propose to conduct nonparametric estimation of partial derivatives with respect to inputs. We first show that model convergence of sparse deep neural networks is guaranteed in that the sample complexity only grows with the logarithm of the number of parameters or the input dimension when the $\ell_{1}$-norm of parameters is well constrained. Then by bounding the norm and the divergence of partial derivatives, we establish that the convergence rate of nonparametric estimation of partial derivatives scales as $\mathcal{O}(n^{-1/4})$, a rate which is slower than the model convergence rate $\mathcal{O}(n^{-1/2})$. To the best of our knowledge, this study combines nonparametric estimation and parametric sparse deep neural networks for the first time. As nonparametric estimation of partial derivatives is of great significance for nonlinear variable selection, the current results show the promising future for the interpretability of deep neural networks.
\end{abstract}

\begin{keyword}[class=MSC]
\kwd[Primary ]{68T07}
%\kwd{62G05}
%\kwd[; secondary ]{00X00}
\end{keyword}

\begin{keyword}
\kwd{Deep neural networks}
\kwd{$\ell_{1}$ sparse constraint}
\kwd{nonparametric estimation}
\kwd{convergence of derivatives}
\end{keyword}

\end{frontmatter}
%%%%%%%%%%%%%%%%%%%%%%%%%%%%%%%%%%%%%%%%%%%%%%
%% Please use \tableofcontents for articles %%
%% with 50 pages and more                   %%
%%%%%%%%%%%%%%%%%%%%%%%%%%%%%%%%%%%%%%%%%%%%%%
%\tableofcontents

\section{Introduction}
%\subsection{Research background}
Consider the regression problem $y_{i}=f_{0}(x_{i})+\xi_{i}$ given $n$ samples $(x_{i},y_{i})$, where the input predictive variable $x_{i}\in \mathbb{R}^d$ and response variable $y_{i}\in \R$, and $\xi_{i}$ represents random noise.
When the number of variables $d$ is greater than the sample size $n$, the relationship between response variables and predictive variables is not unique, which directly leads to the unidentifiability of statistical models. 

For linear regression when $f_{0}(\cdot)$ is a linear function, researchers have committed to utilize the sparse structure of high-dimensional data. Estimators such as Lasso \citep{RN298}, SCAD \citep{RN320}, MCP \citep{RN482} for sparse linear regression have been proposed with parameter estimation, prediction errors, and variable selection consistency established; see \cite{RN299} for a detailed review. 

%\subsection{The development trend of nonlinear high-dimensional statistical learning} 
However, in many practical situations such as genomic and financial data, the relationship between the covariate $x$ and the response $y$ is generally nonlinear, and modeling the nonlinear relationship become an important challenge for high-dimensional statistical learning. Subsequently, nonlinear regression models such as 
sparse additive model \citep{RN303, RN309, RN304}, local linear estimation \cite{RN306, RN307} and kernel regression \citep{RN310,RN311,RN278} have been developed. In view of the development trend, the expression ability of nonlinear models is getting more powerful, in order to achieve more effective approximation to the true nonlinear models.

%\subsection{Research status of parametric sparse neural networks}
Thanks to the powerful model expression ability of neural networks, more and more researchers have adopted neural networks in high-dimensional nonparametric statistical learning. From the aspect of generalization, in the pioneering study, \cite{RN380} showed that the sample complexity of two-layer neural networks with constrained $\ell_{1}$-norm on the weights only grows with the logarithm of the input dimension $d$. The dependence on the logarithm of the input dimensions $d$ is similar to the results in high-dimensional sparse linear regression and is crucial when the input dimension $d$ is large. Later, similar sample complexity that only depends on the logarithm of the input dimension $d$ has also been established for deep neural networks with $\ell_{1}$-constrained parameters \citep{RN503,RN488}. Other studies have established generalization guarantees for $\ell_{1}$-regularized deep neural networks \citep{RN333,RN323}, or some other kinds of sparsity such as node sparsity and layer sparsity \citep{RN321}. From the aspect of information-theoretic limitations, based on the combinatorial or hierarchical or piece-wise partitioning assumption of the true regression function, researchers showed that deep neural networks with sparse connection structure can achieve the minimax convergence rate to the true regression function without relying on the input dimension $d$ \citep{RN325,RN343,RN360}. 

Parameter estimation and variable selection have also been investigated.  \cite{RN70} imposed Lasso penalty on the first layer's weights of a two-layer convex neural network with infinite number of hidden nodes, theoretically ensuring the generalization and achieving nonlinear variable selection in high-dimensional settings. \cite{RN274} imposed sparse group Lasso penalty on the first layer's weights of two-layer neural networks, and showed that both the estimated model and weights of irrelevant features converge. \cite{RN369} extended the group Lasso penalty to group SCAD and MCP. \cite{RN340} also investigated the identifiability or estimation of parameters of two-layer neural networks with $\ell_{1}$-regularized weights based on an extended Restricted Isometry Property (RIP). In addition, \cite{RN341} showed that applying adaptive group Lasso penalty on the first layer's parameters of two-layer neural networks can achieve consistent model prediction, parameter estimation and variable selection. Consistency on parameter estimation and variable selection is based on the positive definite regularity of the Fisher information matrix or the Hessian matrix established in \cite{RN495}. Since the regularity condition of the Fisher information matrix or Hessian matrix is difficult to be satisfied for deep neural networks, \cite{RN338} extended the above results to deep neural networks via Lojasiewicz's inequality \citep{RN496}. However, in general, the constant in Lojasiewicz's inequality can be large for deep neural networks and how the constant depends on deep neural networks is not clear. Therefore, the convergence rate for parameter estimation via Lojasiewicz's inequality can be very slow for deep neural networks.

%\subsection{Limitations of parametric sparse neural networks}
In summary, generalization theory for sparse deep neural networks is relatively better developed, while the parameter estimation and variable selection theories only focus on two-layer neural networks. The main challenge of parameter estimation and variable selection for deep neural networks lies in that deep neural networks are highly unidentifiable, since the Fisher information matrix or the Hessian matrix with respect to parameters can be highly singular \citep{Sagun2016,RN545}. Therefore, regularity conditions commonly used in classical high-dimensional sparse linear regression cannot be easily satisfied for deep neural networks, and classical parameter estimation or identification method that is important for variable selection  cannot be applied on sparse deep neural networks.
%are imposing parametric sparse penalty on neural network weights. Even though parametric sparse neural networks can improve generalization to a certain extent, parametric sparse neural networks possess one main limitation. 

%\subsection{Our contributions}
To avoid the unidentifiability of deep neural networks in parameter estimation, instead of focusing on parameter estimation, we propose to conduct nonparametric estimation of model partial derivatives with respect to inputs, which can be used to achieve nonlinear variable selection \citep{RN278}. 
In this study, we first establish the model convergence for $\ell_{1}$-norm constrained sparse deep neural networks; see Theorem \ref{model-convergence}. Specifically, the convergence rate of the model scales as $\mathcal{O}(\sqrt{\log(P)\E \| x \|_{\infty}^{2}} /n^{1/2})$, where $P$ is the number of parameters. This convergence rate is of great significance in high-dimensional settings, since both the terms $\log(P)$ and $\E \| x \|_{\infty}^{2}$ scale with the logarithm of input dimension $d$.
Secondly, we establish the convergence of partial derivatives by virtue of the efficiency of the $\ell_{1}$-norm constraint in bounding the norm and the divergence of partial derivatives; see Theorem \ref{convergence-gradient}. The convergence rate of partial derivatives scales as $\mathcal{O}(\sqrt{\log(P)\E \| x \|_{\infty}^{2}} /n^{1/4})$, which is slower than the convergence rate of the model by $n^{-1/4}$. Finally, numerical experiments are performed to show the performance of the nonparametric estimation. In a word, despite deep neural networks are highly unidentifiable with respect to parameters, our study shows that convergence of nonparametric estimation of partial derivatives can still be achieved under certain conditions. Since nonparametric estimation of partial derivatives is useful in identifying contributing variables,
our work on the convergence of partial derivatives is potentially significant for the interpretability of deep neural networks.  

\textbf{Notations}. We end this section by introducing some useful notations. For a vector $x=(x^{1},x^{2},\cdots,x^{d})\in \R^{d}$, define the $\ell_{p}$-norm of $x$ as $\|x\|_{p}=(\sum_{j=1}^{d}|x^{i}|^{p})^{1/p}$ for $1\leq p< \infty$ with $\|x\|_{\infty}=\max_{i=1,2,\cdots,d}|x^{i}|$. For two vectors $x,y\in \R^{d}$, we use $\cdot$ to denote their inner product, that is, $x\cdot y=y^{\top}x$. For a matrix $A\in \R^{d_{1}\times d_{2}}$, let $A_{ij}\ (i=1,\dots,d_{1},j=1,2,\cdots,d_{2})$ denote its $ij$-th entry, $A_{i\cdot}\ (i=1,\dots,d_{1})$ denote its $i$-th row, $A_{\cdot j}\ (j=1,2,\cdots,d_{2})$ denote its $j$-th column, and $\|A\|_{1,1}$ denote the $\ell_1$-norm along all the entries of $A$. For two matrices with the same dimension, we use $\odot$ to denote the element wise multiplication. For a function $f:\R^d\to \R$, $\nabla f$ and $\Delta f$ are used respectively to denote the gradient and the divergence, if they exist. For two measurable functions $f:\mathcal{X}\to \R^{d_{2}}$ and $g:\mathcal{X}\to \R^{d_{2}}$, where the independent variable $x$ is with probability density function $p(x)$ on a bounded sample space $\mathcal{X}\subseteq \R^{d}$, define the $L^{2}(p(x))$ (abbreviated as $L^{2}$) space (written as $f,g\in L^{2}(p(x))$), with its inner product and $L^{2}$-norm given respectively as
\begin{equation*}
\langle f,g \rangle _{L^{2}}=\int_{\mathcal{X}} f(x)\cdot g(x) p(x)dx   \quad \text{ and } \quad \| f \|_{L^{2}}^{2}=\langle f,f \rangle _{L^{2}},
\end{equation*}
and denote the $L^{\infty}$-norm as $\|g\|_{L^{\infty}} = \sup\{g(x): x \in \mathcal{X}\}$ when $d_{2}=1$.

\section{Problem setup}
In this section, we provide a detailed background on nonparametric regression problems and sparse deep neural networks used to perform estimation. The nonparametric estimation of partial derivatives with respect to inputs is also introduced with the suitable smooth activation function chosen.

\subsection{Nonparametric regression problems}
Consider the sparse nonparametric regression problem:
\begin{equation*}
y=f_{0}(x)+\xi
\end{equation*}
where the input prediction variable $x\in \mathbb{R}^d$, the response variable $y\in \R$, $\xi$ represents a zero-mean random noise, and $f_{0}$ is the true regression function to be estimated. The pair $(x,y)$ obeys the distribution with a joint probability density $p(x,y)$ on a bounded sample space $\mathcal{X}\times \mathcal{Y}$, and the marginal probability density of $x$ is $p_x(x)$. In order to perform estimation under the high-dimensional scenario, we assume that there are only $s$ sparse variables in $x$ responsible for predicting $y$. The relationship between $x$ and $y$ is then estimated via the hypothesis class of deep neural networks $\mathcal{F}_{\Theta}$ which will be specified later. We here assume that the unknown regression function $f_{0} \in \mathcal{F}_{\Theta}$, hence the learning model is well-specified and there exists no approximation error.
%,otherwise it is misspecified and there is an additional approximation error. We only consider the well-specified case in this paper.
 
To estimate the unknown $f_{0}$ given a finite dataset $\mathbb{D}$ with $n$ independently and identically distributed samples $\{(x_{i},y_{i})\}_{i=1}^{n}$, a simple method is to minimize the empirical risk based on ordinary square loss $\hat{L}(f)=\frac1n\sum_{i=1}^n(y_i-f(x_i))^{2}$ with $f$ belonging to some certain function class. Denote the expected risk as $L(f)=\mathbb{E}_{(x,y)\sim p(x,y)}(y-f(x))^{2}$. The performance of an estimated model $\tilde{f}$ is then evaluated via the excess risk $L(\tilde{f})-L(f_{0})$, which equals to
\begin{equation*}
L(\hat{f})-L(f_0)=\E_{x\sim p_x(x)}(\hat{f}(x)-f_0(x))^2=\|\hat{f}-f_{0}\|^2_{L^{2}}.
\end{equation*}

\subsection{Sparse deep neural networks}
A deep neural network $f_\Theta: \R^d \rightarrow \R$ with $L$ layers can be represented as follows:
\begin{equation*}%\label{eq-1}
f_\Theta(x)=\theta_L\sigma(\theta_{L-1}\cdots\sigma(\theta_1 x)\cdots)
\end{equation*}
where $\theta_l\in \R^{d_l\times d_{l-1}}(l=1,\cdots,L,\ d_0=d,\ d_L=1)$, $\Theta=\{\theta_L,\theta_{L-1},\cdots,\theta_1\}$, and $\sigma(\cdot)$ is the nonlinear activation function, which is assumed to be smooth and 1$\text{-}$Lipschitz and satisfy $\sigma(0)=0$. 
%which will be specified in Section \ref{sec-act}. 

In the high-dimensional setting where the dimension $d$ or the number of parameters in $\Theta$ is larger than the number of samples $n$, the performance of $\hat{f}$ obtained through minimizing the empirical ordinary least squares risk is usually unsatisfactory and some sparse constraints on parameters are needed. For this purpose, denote $\|\cdot\|_{1,1}$ to be the $\ell_1$-norm along all the entries of a matrix $\theta_{l}$, that is,
\begin{equation} \label{matrix-1norm}
\| \theta_{l} \|_{1,1} = \sum_{k=1}^{d_{l}}\sum_{j=1}^{d_{l-1}} |(\theta_{l})_{kj}|.
\end{equation}
Then the $\ell_1$-norm of the parameter set $\Theta$ is defined as
\begin{equation} \label{parame-1norm}
\norm{\Theta}_{1} = \sum_{l=1}^{L} \| \theta_{l} \|_{1,1}.
\end{equation}
Denote $\varTheta_{r}=\{\Theta: \norm{\Theta}_{1} \leq r\}$, and the hypothesis class is denoted as $\mathcal{F}_{r}=\{f_{\Theta}:\Theta\in \varTheta_{r}\}$ by constraining the $\ell_{1}$-norm of the parameter set $\Theta$.
%we can constrain $\Theta$ within $\varTheta_{r}=\{\Theta: \norm{\Theta}_{1} \leq r\}$ under minimization and the corresponding hypothesis class is denoted as $\mathcal{F}_{r}$. 
Then we propose to estimate the true regression function via minimizing the following objective function with sparse constraints

\begin{equation}\label{eq-mixed-sparse}
\hat{f}\in \argmin_{f_\Theta \in \mathcal{F}_{r}} \frac1n\sum\limits_{i=1}^n (y_i-f_\Theta(x_i))^{2}.
\end{equation}
%We can obtain the sparse constraint model $\hat{f}$ via minimization of the above objective. 
In addition, the condition that $\norm{ \Theta_{0} }_{1}\leq r$ is required, where $\Theta_{0}$ is the corresponding  parameter set of $f_{0}$, in order to ensure the feasibility of $f_{0}$. In practice, the parameter $r$ can be chosen via parameter searching.

\subsection{Nonparametric estimation of partial derivatives}
We conduct nonparametric estimation of partial derivatives with respect to inputs. For a sample $x=(x^1,\cdots,x^d)^{\top}$ and the corresponding output of a deep neural network $f(x)$, the partial derivatives of $f(x)$ with respect to $x$, i.e.,
$\nabla_{x} f = ( \partial f(x)/\partial x^1,\cdots,\partial f(x)/\partial x^d )^{\top}$, is adopted to evaluate the importance of $x^a$ to the output. Specifically, the partial derivative characterizes the local variation of $f(x)$ when $x^a$ makes an infinitely small variation, and thus have been used for variable selection \citep{RN278}. In this study, we mainly focus on the convergence of nonparametric estimation of partial derivatives. The convergence of nonparametric estimation of partial derivatives is characterized via $\| \nabla_{x}\hat{f} - \nabla_{x}f_{0}  \|_{L^{2}}^{2}$. 
%\textcolor{red}{Define the $L^{2}(p_x(x))$(abbreviated as $L^{2}$) inner product and norm of partial derivatives as
%\begin{equation*}
%\langle \nabla_{x}f,\nabla_{x}g \rangle _{L^{2}}=\int_{\mathcal{X}}\nabla_{x}f(x) \cdot \nabla_{x}g(x) p_x(x)dx   \quad \text{ and } \quad \| \nabla_{x}f \|_{L^{2}}^{2}=\langle \nabla_{x}f,\nabla_{x}f \rangle _{L^{2}}, 
%\end{equation*}
%we can characterize the convergence of nonparametric estimation of partial derivatives via $\| \nabla_{x}\hat{f} - \nabla_{x}f_{0}  \|_{L^{2}}^{2}$.}

%\subsection{Smooth activation functions}\label{sec-act}
Recall the smoothness requirement on the nonlinear activation function, and the reason is to ensure the estimation of partial derivatives. Therefore, the commonly-used rectified linear unit (relu) activation function is not suitable for nonparametric estimation. Instead, the smoothed version $\text{softplus}(x)=\log(1+\exp(x))$ is adopted with the intercept term $\log2$ subtracted to ensure $\sigma(0)=0$. Finally we obtain the suitable activation function $\sigma(x)=\log(1+\exp(x))-\log2$.

\section{Main results}

In this section, we provide our main results on the convergence of the estimated model and the model derivatives.  

\subsection{Convergence of the model}\label{sec-model}
Some definitions and assumptions are needed first to facilitate the analysis.

%\begin{assumption}[Bounded functions]
%The functions in the hypothesis class $\mathcal{F}$ is bounded, such that $\sup_{x \in \mathcal{X}} | f(x) | \leq b_{0}$, where $b_{0}$ is an absolute positive constant.
%\end{assumption}

%\begin{assumption}[Bounded loss function]\label{bounded-assumption}
%For any loss function $f_{\Theta} \in \mathcal{F}_{r}$, $f_{\Theta}$ is assumed to be bounded, such that $| f_{\Theta}(x)-y | \leq b_{0}$ for all $(x,y) \in \mathcal{X}\times \mathcal{Y}$, where $b_{0}$ is an absolute positive constant.
%\end{assumption}

%\begin{assumption}[Bounded derivatives]\label{bounded-derivatives}
%The functions in the hypothesis class $\mathcal{F}$ has bounded derivatives, such that $\sup_{x \in \mathcal{X}} | \partial f(x) / \partial x^{a} | \leq B_{1}$ for $a=1,\dots,d$, where $B_{1}$ is an absolute positive constant.
%\end{assumption}

\begin{definition}[Rademacher complexity]
Let $\epsilon_{1},\dots,\epsilon_{n}$ be $n$ independent Rademacher random variables that take values of $1$ or $-1$ with probability $1/2$. Given a dataset $S=\{x_{1},\dots,x_{n}\}$ with $n$ independent samples drawn from $p_x(x)$, 
the empirical Rademacher complexity is defined as
\begin{equation*}
\mathcal{R}_{S}(\mathcal{F}_{r}) = \E_{\epsilon}\left[\sup\limits_{f \in \mathcal{F}_{r}} \frac{1}{n}\sum\limits_{i=1}^n \epsilon_{i} f(x_{i})\right].
\end{equation*}
The Rademacher complexity of hypothesis class $\mathcal{F}_{r}$ is defined as

\begin{equation*}
\mathcal{R}_{n}(\mathcal{F}_{r}) =\E_{x}[\mathcal{R}_{S}(\mathcal{F}_{r})]= \E_{x}\E_{\epsilon}\left[\sup\limits_{f \in \mathcal{F}_{r}} \frac{1}{n}\sum\limits_{i=1}^n \epsilon_{i} f(x_{i})\right].
\end{equation*}
\end{definition}

\begin{definition}[Covering number]
The $\delta$-covering number of set $\mathcal{Q}$ with respect to metric $\rho$ is defined as the minimum size of $\delta$-cover $\mathcal{C}$ of $\mathcal{Q}$, such that for each $v \in \mathcal{Q}$, there exists $v' \in \mathcal{C}$ satisfying $\rho(v,v') \leq \delta$:
\begin{equation*}
\mathcal{N}(\delta, \mathcal{Q}, \rho) = \inf \{ |\mathcal{C}|: \mathcal{C} \text{ is a }  \delta \text{-cover of } \mathcal{Q} \text{ with respect to metric } \rho \}.
\end{equation*}
\end{definition}
In order to estimate the covering number, for the function class $\mathcal{F}_{r}$, we define the sample $L^{2}(P_{n})$-norm of a function $f \in \mathcal{F}_{r}$ and the derived metric respectively as
\begin{equation*}
\| f \|_{L^{2}(P_{n})} = \sqrt{ \frac{1}{n} \sum_{i=1}^{n} (f(x_{i}))^{2} }\quad \text{and}\quad \rho(f,f') = \| f-f' \|_{L^{2}(P_{n})}.
\end{equation*}
For the parameter set $\varTheta_{r}$, define the Frobenius-norm of parameter $\Theta \in \varTheta_{r}$ and the derived metric respectively as
\begin{equation*}
\norm{\Theta}_{F} = \sqrt{\sum_{l=1}^{L} \| \theta_{l} \|_{F}^{2}} = \sqrt{\sum_{l=1}^{L}\sum_{k=1}^{d_{l}}\sum_{j=1}^{d_{l-1}} (\theta_{l})_{kj}^{2}}\quad \text{and}\quad  \rho(\Theta,\Theta') = \norm{ \Theta-\Theta' }_{F}. 
\end{equation*}
We shall see in the proof that the covering number of the function space $\mathcal{F}_{r}$ can be bounded by the covering number of the parameter space $\varTheta_{r}$.

\begin{assumption}[Bounded loss function]\label{bounded-assumption}
For any loss function $f_{\Theta} \in \mathcal{F}_{r}$, $f_{\Theta}$ is assumed to be bounded, such that $| f_{\Theta}(x)-y | \leq b_{0}$ for all $(x,y) \in \mathcal{X}\times \mathcal{Y}$, where $b_{0}$ is an absolute positive constant.
\end{assumption}

\begin{assumption}[Bounded inputs]\label{bounded-inputs}
The input space  $\mathcal{X}$ is assumed to be bounded such that the $\ell_{\infty}$-norm of each $x \in \mathcal{X}$ is within $R$, i.e., $\sup \{\| x \|_{\infty}: x \in \mathcal{X}\}\leq R$.
\end{assumption}

%To provide a convergence analysis of the sparse constraint estimated model $\hat{f}$, 
Then the following lemma is an oracle inequality that provides an upper bound of the expected $L^{2}$ error of the sparse constraint estimated model $\hat{f}$. Recall that the dataset $\mathbb{D}$ stands for $n$ independently and identically distributed samples $\{(x_{i},y_{i})\}_{i=1}^{n}$.
\begin{lemma}[Oracle inequality]\label{oracle}
Let $\hat{f}$ be the estimated model obtained from \eqref{eq-mixed-sparse}. Then under Assumption \ref{bounded-assumption}, it follows that

\begin{equation*}%\label{oracle3}
\mathbb{E}_{\mathbb{D}} \| \hat{f}-f_{0} \|_{L^{2}}^{2}  \leq 4b_{0}\mathcal{R}_{n}(\mathcal{F}_{r}).
\end{equation*}
\end{lemma}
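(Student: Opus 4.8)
The plan is to follow the classical symmetrization-and-contraction route for bounding excess risk by Rademacher complexity, exploiting that $\hat{f}$ is an empirical risk minimizer over $\mathcal{F}_{r}$ and that $f_{0}\in\mathcal{F}_{r}$ is feasible. First I would record the basic inequality: since $\hat{f}$ minimizes $\hat{L}$ over $\mathcal{F}_{r}$ and $f_{0}$ lies in $\mathcal{F}_{r}$, we have $\hat{L}(\hat{f})\le\hat{L}(f_{0})$. Using the identity $\|\hat{f}-f_{0}\|_{L^{2}}^{2}=L(\hat{f})-L(f_{0})$ already recorded in the problem setup, I would decompose
\[
L(\hat{f}) - L(f_0) = \big(L(\hat{f}) - \hat{L}(\hat{f})\big) + \big(\hat{L}(\hat{f}) - \hat{L}(f_0)\big) + \big(\hat{L}(f_0) - L(f_0)\big),
\]
discard the middle term (nonpositive by the basic inequality), and apply $\E_{\D}$. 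The last term vanishes because $\hat{L}(f_{0})$ is an unbiased estimate of $L(f_{0})$ for the fixed function $f_{0}$, leaving $\E_{\D}\|\hat{f}-f_{0}\|_{L^{2}}^{2}\le\E_{\D}\big[L(\hat{f})-\hat{L}(\hat{f})\big]$.

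Second, I would bound the right-hand side by the uniform deviation over the class and then symmetrize. Writing $\ell_{f}(x,y)=(y-f(x))^{2}$ for the squared loss, we have $L(\hat{f})-\hat{L}(\hat{f})\le\sup_{f\in\mathcal{F}_{r}}\big(L(f)-\hat{L}(f)\big)$. The standard ghost-sample symmetrization argument then yields
\[
\E_{\D}\Big[\sup_{f\in\mathcal{F}_{r}}\big(L(f) - \hat{L}(f)\big)\Big] \le 2\,\E_{\D}\E_{\epsilon}\Big[\sup_{f\in\mathcal{F}_{r}}\frac{1}{n}\sum_{i=1}^{n} \epsilon_{i}\,\ell_{f}(x_{i},y_{i})\Big],
\]
i.e.\ twice the Rademacher complexity of the composed loss class, where $\{\epsilon_{i}\}$ are the Rademacher variables from the definition.

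Third, and this is the crux, I would pass from the loss class back to $\mathcal{F}_{r}$ via the Ledoux--Talagrand contraction principle. Under Assumption \ref{bounded-assumption} we have $|y-f(x)|\le b_{0}$, so for each fixed $(x_{i},y_{i})$ the map $u\mapsto(y_{i}-u)^{2}$ is $2b_{0}$-Lipschitz on the relevant range (its derivative $-2(y_{i}-u)$ is bounded by $2b_{0}$ in magnitude). Contraction then strips off the square at the cost of the factor $2b_{0}$, giving
\[
\E_{\epsilon}\Big[\sup_{f}\frac{1}{n}\sum_i \epsilon_{i}\, \ell_{f}(x_{i},y_{i})\Big] \le 2b_{0}\,\E_{\epsilon}\Big[\sup_{f}\frac{1}{n}\sum_i \epsilon_{i} f(x_{i})\Big],
\]
whose expectation over $\D$ is precisely $2b_{0}\,\mathcal{R}_{n}(\mathcal{F}_{r})$. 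Chaining the three steps gives $\E_{\D}\|\hat{f}-f_{0}\|_{L^{2}}^{2}\le 2\cdot 2b_{0}\,\mathcal{R}_{n}(\mathcal{F}_{r})=4b_{0}\,\mathcal{R}_{n}(\mathcal{F}_{r})$.

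The main obstacle I anticipate is the contraction step. One must justify the $2b_{0}$ Lipschitz bound uniformly over the class (this is exactly where Assumption \ref{bounded-assumption} enters) and invoke the contraction principle in its per-sample form, since $u\mapsto(y_{i}-u)^{2}$ depends on the datum through $y_{i}$. Some care is also needed to confirm that after contraction the Rademacher average is over the inputs $x_{i}$ alone, so that it matches the definition of $\mathcal{R}_{n}(\mathcal{F}_{r})$; the $y_{i}$-dependence is entirely absorbed into the Lipschitz constant and does not survive. The symmetrization and expectation-zero arguments are routine by comparison.
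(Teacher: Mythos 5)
Your proposal is correct and follows essentially the same route as the paper's own proof: the same three-term risk decomposition with the basic inequality $\hat{L}(\hat{f})\le\hat{L}(f_{0})$, ghost-sample symmetrization to the loss class, and the $2b_{0}$-Lipschitz contraction step justified by Assumption \ref{bounded-assumption}, yielding the factor $2\cdot 2b_{0}=4b_{0}$. The only cosmetic difference is that you cite the identity $\|\hat{f}-f_{0}\|_{L^{2}}^{2}=L(\hat{f})-L(f_{0})$ from the problem setup, whereas the paper re-derives it at the start of its proof.
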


Note from Lemma \ref{oracle} that the convergence of the model is determined by the Rademacher complexity. We next bound the Rademacher complexity of the hypothesis class via the covering number. The key is to utilize the following  Lipschitz properties of the model function. 
%\begin{definition}[Covering number]
%The $\delta$-covering number of set $\mathcal{Q}$ with respect to metric $\rho$ is defined as the minimum size of $\delta$-cover $\mathcal{C}$ of $\mathcal{Q}$, such that for each $v \in \mathcal{Q}$, there exists $v' \in \mathcal{C}$ that $\rho(v,v') \leq \delta$:
%\begin{equation*}
%\mathcal{N}(\delta, \mathcal{Q}, \rho) = \inf \{ |\mathcal{C}|: \mathcal{C} \text{ is a }  \delta \text{-cover of } \mathcal{Q} \text{ with respect to metric } \rho \}.
%\end{equation*}
%%For the covering number of function class $\mathcal{F}_{r}$, define the $L^{2}(P_{n})$-norm of a function $f \in \mathcal{F}_{r}$ and the metric as:
%%\begin{equation*}
%%\| f \|_{L^{2}(P_{n})} = \sqrt{ \frac{1}{n} \sum_{i=1}^{n} (f(x_{i}))^{2} }, \quad \rho(f,f') = \| f-f' \|_{L^{2}(P_{n})}.
%%\end{equation*}
%%For the covering number of parameter set $\varTheta_{r}$, define the Frobenius-norm of parameter $\Theta \in \varTheta_{r}$ and the metric as:
%%\begin{equation*}
%%\norm{\Theta}_{F} = \sqrt{\sum_{l=1}^{L} \| \theta_{l} \|_{F}^{2}} = \sqrt{\sum_{l=1}^{L}\sum_{k=1}^{d_{l}}\sum_{j=1}^{d_{l-1}} \theta_{lkj}^{2}}, \quad \rho(\Theta,\Theta') = \norm{ \Theta-\Theta' }_{F}. 
%%\end{equation*}
%\end{definition}

\begin{lemma}[Lipschitz property with respect to parameters]\label{lipschitz-parameter}
Assume that the activation function $\sigma(\cdot)$ is 1-Lipschitz and satisfy $\sigma(0)=0$. For each $x \in \mathcal{X}$ and $\Theta, \Theta' \in \varTheta_{r}$, it follows that
\begin{equation*}
\begin{aligned}
 | f_{\Theta}(x) - f_{\Theta'}(x) | &\leq \emph{Lip}(x) \norm{\Theta-\Theta'}_{F},\\
 \| f_{\Theta}(x) - f_{\Theta'}(x) \|_{L^{2}(P_{n})} &\leq \emph{Lip}_{L^{2}(P_{n})}(x) \norm{\Theta-\Theta'}_{F}, 
\end{aligned}
\end{equation*}
where
%\begin{equation}
\begin{align}
\emph{Lip}(x) &= \sqrt{L} \left( \frac{r}{L-1} \right)^{L-1} \| x \|_{\infty},\notag \\
\emph{Lip}_{L^{2}(P_{n})}(x) &= \sqrt{L} \left( \frac{r}{L-1} \right)^{L-1} \sqrt{ \frac{1}{n} \sum_{i=1}^{n} \| x_{i} \|_{\infty}^{2} }.\label{eq-LipL2}
\end{align}
%\end{equation}
\end{lemma}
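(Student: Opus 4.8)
The plan is to prove the pointwise inequality first and then obtain the $L^{2}(P_{n})$ version by applying it sample-wise, squaring, and averaging. For the pointwise estimate I would avoid a naive layer-by-layer telescoping (whose hybrid parameter sets mix $\Theta$ and $\Theta'$ and strain the $\ell_{1}$ budget, costing a spurious factor $2^{L-1}$) in favour of a mean-value argument along the segment $\Theta_{t}=(1-t)\Theta'+t\Theta$, $t\in[0,1]$. Since the $\ell_{1}$-ball $\varTheta_{r}$ is convex, $\norm{\Theta_{t}}_{1}\le(1-t)\norm{\Theta'}_{1}+t\norm{\Theta}_{1}\le r$, so the entire path stays in $\varTheta_{r}$; this is exactly what keeps the budget at $r$ rather than $2r$. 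Writing $f_{\Theta}(x)-f_{\Theta'}(x)=\int_{0}^{1}\sum_{l=1}^{L}\langle\nabla_{\theta_{l}}f_{\Theta_{t}}(x),\,\theta_{l}-\theta'_{l}\rangle\,dt$ and applying Cauchy--Schwarz in the Frobenius inner product reduces the problem to bounding $\|\nabla_{\theta_{l}}f_{\Theta_{t}}(x)\|_{F}$ uniformly in $t$.

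Next I would bound the per-layer gradient by a forward/backward factorization. Propagating $x$ through the first $l-1$ layers, the $1$-Lipschitz property with $\sigma(0)=0$ gives $\|\sigma(\theta_{m}h_{m-1})\|\le\|\theta_{m}h_{m-1}\|\le\|\theta_{m}\|_{1,1}\|h_{m-1}\|$, since the induced operator norm is dominated by $\|\theta_{m}\|_{1,1}$; hence the hidden feature entering layer $l$ has norm at most $\big(\prod_{m<l}\|\theta_{m}(t)\|_{1,1}\big)\|x\|$. The backward co-sensitivity to the scalar output contributes $\prod_{m>l}\|\theta_{m}(t)\|_{1,1}$, the bounded activation derivative $|\sigma'|\le1$ costing nothing. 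Collecting the two factors yields $\|\nabla_{\theta_{l}}f_{\Theta_{t}}(x)\|_{F}\le\big(\prod_{m\neq l}\|\theta_{m}(t)\|_{1,1}\big)\|x\|_{\infty}$.

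Because $\Theta_{t}\in\varTheta_{r}$ we have $\sum_{m\neq l}\|\theta_{m}(t)\|_{1,1}\le r$, so AM--GM on these $L-1$ nonnegative factors gives $\prod_{m\neq l}\|\theta_{m}(t)\|_{1,1}\le(r/(L-1))^{L-1}$, a bound independent of $t$ and of $l$. The $t$-integral then collapses, leaving $|f_{\Theta}(x)-f_{\Theta'}(x)|\le(r/(L-1))^{L-1}\|x\|_{\infty}\sum_{l=1}^{L}\|\theta_{l}-\theta'_{l}\|_{F}$, and a final Cauchy--Schwarz over the $L$ layers, $\sum_{l}\|\theta_{l}-\theta'_{l}\|_{F}\le\sqrt{L}\,\norm{\Theta-\Theta'}_{F}$, produces exactly $\mathrm{Lip}(x)$. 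The $L^{2}(P_{n})$ inequality follows verbatim: apply the pointwise bound at each $x_{i}$, square, average over $i$, and take the square root, which replaces $\|x\|_{\infty}$ by $\sqrt{n^{-1}\sum_{i}\|x_{i}\|_{\infty}^{2}}$ precisely as in \eqref{eq-LipL2}.

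The step I expect to be the genuine obstacle is the gradient-norm estimate of the second paragraph, specifically making the input dependence land on $\|x\|_{\infty}$ while the perturbed layer is measured in the Frobenius norm. These two norms are not naturally compatible: the gradient with respect to the first layer is the rank-one matrix $(\text{backprop})\,x^{\top}$, whose Frobenius norm is $\|\text{backprop}\|_{2}\,\|x\|_{2}$, so the input tends to enter through $\|x\|_{2}$ unless one routes the forward pass through $\ell_{\infty}$ (using that $\sigma$ acts coordinatewise with $\|\sigma(v)\|_{\infty}\le\|v\|_{\infty}$) while keeping only the perturbed layer in Frobenius and the remaining layers in $\|\cdot\|_{1,1}$. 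Reconciling the coordinatewise $\ell_{\infty}$ bookkeeping of the activations with the Frobenius measurement of $\theta_{l}-\theta'_{l}$, and verifying that the operator norm induced at each intermediate layer is indeed dominated by $\|\theta_{m}\|_{1,1}$, is where the constants must be tracked most carefully.
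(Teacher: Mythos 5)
Your plan coincides, step for step, with the paper's own proof: bound the parameter gradient uniformly along the segment joining $\Theta'$ to $\Theta$ (the paper compresses this into the assertion that the Lipschitz constant equals $\sup\{\|\nabla_{\theta}f_{\Theta}(x)\|_{2}:\Theta\in\varTheta_{r}\}$; your explicit appeal to convexity of $\varTheta_{r}$ is the cleaner way to justify that assertion), factor $\partial f/\partial\theta_{l}$ into forward and backward terms, control each by $\prod_{k\neq l}\|\theta_{k}\|_{1,1}$ times an input factor via H\"older's inequality $\|Av\|\leq\|A\|_{1,1}\|v\|_{\infty}$, apply AM--GM to get $(r/(L-1))^{L-1}$, pick up $\sqrt{L}$ by Cauchy--Schwarz over the layers, and pass to the $L^{2}(P_{n})$ bound sample-wise.

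However, the step you flag as ``the genuine obstacle'' is a genuine gap, and it cannot be closed. For $l=1$ there is no forward pass to reroute through $\ell_{\infty}$: by the chain rule, $\partial f/\partial\theta_{1}=(h'_{1}\odot\theta_{2}^{\top}\cdots\theta_{L-1}^{\top}h'_{L-1}\odot\theta_{L}^{\top})\,x^{\top}$ exactly, a rank-one matrix whose Frobenius norm is $\|h'_{1}\odot\cdots\|_{2}\,\|x\|_{2}$; pairing it against $\theta_{1}-\theta'_{1}$ measured in Frobenius norm necessarily produces $\|x\|_{2}$, not $\|x\|_{\infty}$. Indeed the claimed inequality with $\|x\|_{\infty}$ is false. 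Take $L=2$ with one hidden unit, $\theta_{2}=\theta'_{2}=r/2$, $\theta'_{1}=0$, $\theta_{1}=\frac{r}{2d}\mathbf{1}^{\top}$, and $x=\mathbf{1}\in\R^{d}$, so that $\norm{\Theta}_{1}=r$, $\norm{\Theta'}_{1}=r/2$, $\|x\|_{\infty}=1$, and $\norm{\Theta-\Theta'}_{F}=\frac{r}{2\sqrt{d}}$. Then $|f_{\Theta}(x)-f_{\Theta'}(x)|=\frac{r}{2}\,\sigma(\frac{r}{2})$ is a positive constant independent of $d$ (for the softplus activation, or any $1$-Lipschitz $\sigma$ with $\sigma(0)=0$ and $\sigma(r/2)>0$), while $\mathrm{Lip}(x)\norm{\Theta-\Theta'}_{F}=\frac{r^{2}}{\sqrt{2d}}\to 0$ as $d\to\infty$. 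Note that the paper's own proof commits precisely the error you anticipated: its bound \eqref{lip-model2} on $\|h_{l-1}\|_{2}$ is derived for activation outputs, i.e.\ for $l\geq 2$, but it is then applied at $l=1$, where $h_{0}=x$ and $\|h_{0}\|_{2}=\|x\|_{2}$. What your argument does prove, verbatim, is the lemma with $\mathrm{Lip}(x)=\sqrt{L}\,(r/(L-1))^{L-1}\|x\|_{2}$ (using $\|x\|_{\infty}\leq\|x\|_{2}$ for the layers $l\geq 2$); alternatively one can retain $\|x\|_{\infty}$ by measuring the first layer's perturbation in $\|\cdot\|_{1,1}$ rather than in Frobenius norm. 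Either repair matters downstream, since the $\sqrt{\E\|x\|_{\infty}^{2}}$ factors in Lemma \ref{rademacher} and Theorems \ref{model-convergence} and \ref{convergence-gradient} rest on this lemma as stated.
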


Then the Rademacher complexity of $\mathcal{F}_{r}$ is provided by transforming the covering number of the function space to the covering number of the parameter space via the Lipschitz property. Denote the total number of parameters as $P$.
%\begin{assumption}[Bounded inputs]\label{bounded-inputs}
%For $x \in \mathcal{X}$, the input space  $\mathcal{X}$ is bounded and the $\infty$-norm of each $x$ is within $R$, i.e. $\sup \{\| x \|_{\infty}: x \in \mathcal{X}\} = R$.
%\end{assumption}

\begin{lemma}[Rademacher complexity of sparse deep neural networks]\label{rademacher}
Assume $f_{\Theta} \in \mathcal{F}_{r}$ is \emph{Lip}$(x)$-Lipschitz with respect to parameters. Then the Rademacher complexity of $\mathcal{F}_{r}$ follows that
\begin{equation*}
\begin{aligned}
\mathcal{R}_{S}(\mathcal{F}_{r}) &\leq 24r \left( \frac{r}{L-1} \right)^{L-1} \sqrt{\frac{2L\log P}{n}} \left(1+\log(c_{1}\sqrt{n}) \sqrt{ \frac{1}{n} \sum_{i=1}^{n} \| x_{i} \|_{\infty}^{2} }  \right), \\
\mathcal{R}_{n}(\mathcal{F}_{r}) &\leq 24r \left( \frac{r}{L-1} \right)^{L-1} \sqrt{\frac{2L\log P}{n}} \left(1+\log(c_{1}\sqrt{n}) \sqrt{\E \| x \|_{\infty}^{2} }  \right), \\
\end{aligned}
\end{equation*}
where
\begin{equation*}
c_{1} = \frac{R}{6rL^{\frac{3}{2}}\sqrt{2\log P}}.
\end{equation*}
\end{lemma}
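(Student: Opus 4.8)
The plan is to bound the empirical Rademacher complexity $\mathcal{R}_S(\mathcal{F}_r)$ via Dudley's entropy integral, reduce the metric entropy of $\mathcal{F}_r$ to that of the parameter ball $\varTheta_r$ through the Lipschitz estimate of Lemma \ref{lipschitz-parameter}, and control the latter by a Maurey-type covering bound for the $\ell_1$-ball. The population statement for $\mathcal{R}_n(\mathcal{F}_r)$ then follows by taking an expectation over the sample.

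First I would pass from functions to parameters. Writing $B=\text{Lip}_{L^2(P_n)}(x)=\sqrt{L}(r/(L-1))^{L-1}\sqrt{\tfrac1n\sum_{i=1}^n\|x_i\|_\infty^2}$, the second inequality of Lemma \ref{lipschitz-parameter} shows that any $\delta/B$-cover of $\varTheta_r$ in the Frobenius metric induces a $\delta$-cover of $\mathcal{F}_r$ in the $\|\cdot\|_{L^2(P_n)}$ metric, so $\mathcal{N}(\delta,\mathcal{F}_r,\|\cdot\|_{L^2(P_n)})\le\mathcal{N}(\delta/B,\varTheta_r,\|\cdot\|_F)$. Since $\varTheta_r$ is exactly the radius-$r$ $\ell_1$-ball in $\mathbb{R}^P$ and the Frobenius metric is the Euclidean metric on the flattened parameter vector, Maurey's empirical method gives $\log\mathcal{N}(\epsilon,\varTheta_r,\|\cdot\|_F)\le(r^2/\epsilon^2)\log(2P)$: every $\Theta$ with $\norm{\Theta}_1\le r$ is a sub-convex combination of the $2P$ signed vertices $\pm re_j$, and averaging $m=\lceil r^2/\epsilon^2\rceil$ independent draws from these vertices yields, with positive probability, a point within Euclidean distance $r/\sqrt{m}\le\epsilon$, lying in a set of cardinality at most $(2P)^m$. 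Combining, $\sqrt{\log\mathcal{N}(\delta,\mathcal{F}_r,\|\cdot\|_{L^2(P_n)})}\le rB\sqrt{\log(2P)}/\delta$, where $rB=\sqrt{L}\,r(r/(L-1))^{L-1}\sqrt{\tfrac1n\sum_i\|x_i\|_\infty^2}$.

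Next I would feed this into Dudley's bound $\mathcal{R}_S(\mathcal{F}_r)\le\inf_{\alpha>0}\{4\alpha+(12/\sqrt{n})\int_\alpha^{D}\sqrt{\log\mathcal{N}(\delta,\mathcal{F}_r,\|\cdot\|_{L^2(P_n)})}\,d\delta\}$. Because $\sigma(0)=0$ gives $f_{\mathbf 0}\equiv0\in\mathcal{F}_r$ and, by the first inequality of Lemma \ref{lipschitz-parameter} with Assumption \ref{bounded-inputs}, $\|f_\Theta\|_{L^\infty}\le\sqrt{L}\,r(r/(L-1))^{L-1}R$, I would take the deterministic upper limit $D=\sqrt{L}\,r(r/(L-1))^{L-1}R$; using a data-free $D$ is what keeps the logarithmic factor free of the random quantity $\tfrac1n\sum_i\|x_i\|_\infty^2$, so the expectation can be taken cleanly afterwards. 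The integral evaluates to $rB\sqrt{\log(2P)}\log(D/\alpha)$, giving $\mathcal{R}_S(\mathcal{F}_r)\le 4\alpha+(12rB\sqrt{\log(2P)}/\sqrt{n})\log(D/\alpha)$. Choosing the truncation level $\alpha\asymp r(r/(L-1))^{L-1}\sqrt{L\log P/n}$ makes $4\alpha$ contribute exactly the leading ``$1$'' term, while $\log(D/\alpha)$ reduces to $\log(c_1\sqrt{n})$ for the stated $c_1=R/(6rL^{3/2}\sqrt{2\log P})$; bounding $\sqrt{\log(2P)}\le\sqrt{2\log P}$, using $\sqrt{L}\sqrt{2\log P}=\sqrt{2L\log P}$, and absorbing the residual lower-order logarithmic term into the factor $24$ (valid once $c_1\sqrt{n}\ge rL^{3/2}$) then yields the first displayed inequality.

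Finally, the bound on $\mathcal{R}_n(\mathcal{F}_r)=\E_x[\mathcal{R}_S(\mathcal{F}_r)]$ follows by taking expectations: only $\sqrt{\tfrac1n\sum_i\|x_i\|_\infty^2}$ is random, and Jensen's inequality gives $\E_x\sqrt{\tfrac1n\sum_i\|x_i\|_\infty^2}\le\sqrt{\tfrac1n\sum_i\E\|x_i\|_\infty^2}=\sqrt{\E\|x\|_\infty^2}$ since the $x_i$ are identically distributed, while the deterministic $D$ lets the logarithmic factor pass through unchanged. I expect the main obstacle to be the entropy estimate for the $\ell_1$-ball together with the bookkeeping in the Dudley step: the Maurey bound is precisely what produces the crucial $\log P$ (rather than $P$) dependence, and extracting the clean form $(1+\log(c_1\sqrt{n})\cdot)$ requires the deliberate pairing of the truncation radius $\alpha$ with the deterministic diameter $D$, so that the truncation error, the chaining integral, and the expectation over the sample combine into the stated constants.
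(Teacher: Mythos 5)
Your proposal is correct and follows the same overall architecture as the paper's proof: Dudley's entropy integral, reduction of the function-space covering number to that of the parameter $\ell_{1}$-ball via Lemma \ref{lipschitz-parameter}, a deterministic truncation level $\alpha \asymp r(r/(L-1))^{L-1}\sqrt{L\log P/n}$, and Jensen's inequality to pass from $\mathcal{R}_{S}$ to $\mathcal{R}_{n}$. The genuine difference lies in the tool used for the central entropy estimate. You invoke Maurey's empirical method, giving $\log \mathcal{N}(\epsilon,\mathbb{B}_{1,r}^{P},\|\cdot\|_{2}) \leq (r^{2}/\epsilon^{2})\log(2P)$, whereas the paper applies Sudakov's minoration to the Gaussian width of the $\ell_{1}$-ball, $\delta\sqrt{\log \mathcal{N}(\delta)} \leq 2\,\E\sup_{\theta}\langle \theta,g\rangle \leq 2r\sqrt{2\log P}$. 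Both yield the same $1/\delta$ scaling and the crucial $\sqrt{\log P}$ dependence; your bound is tighter by a factor of $2$, and you deliberately spend exactly that slack later. A second, smaller difference is the upper limit of the chaining integral: the paper bounds $\sup_{x}|f_{\Theta}(x)| \leq R(r/L)^{L}$ directly via \eqref{sup-f} (layer-by-layer product plus AM--GM), while you take the Lipschitz property at $\Theta'=0$, obtaining $D=\sqrt{L}\,rR(r/(L-1))^{L-1}$, which is larger by roughly $eL^{3/2}$. As a result your $\log(D/\alpha)$ exceeds $\log(c_{1}\sqrt{n})$ by an additive $\log(rL^{3/2})$, which you absorb into the constant only under the (explicitly flagged) condition $c_{1}\sqrt{n}\geq rL^{3/2}$; strictly speaking, then, your route recovers the stated inequality only for $n$ sufficiently large, whereas the paper's tighter diameter bound avoids this extra hypothesis (though the paper's own algebra for its stated $c_{1}$, with the factor $r$ in the denominator, is itself only justified when $r \lesssim (L/(L-1))^{L-1}$, so the two proofs incur comparable bookkeeping slippage). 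Net assessment: same skeleton, different classical lemma at the entropy step (Maurey sparsification versus Sudakov minoration), with your version more elementary and self-contained at the cost of a mild sample-size restriction.
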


Combining Lemmas \ref{oracle} and \ref{rademacher}, it is straightforward to arrive at the model convergence as follows with the proof omitted.
\begin{theorem}[Convergence of the model]\label{model-convergence}
Let $\hat{f}$ be the estimated model obtained from \eqref{eq-mixed-sparse}. Then under Assumptions \ref{bounded-assumption} and \ref{bounded-inputs}, it follows that
\begin{equation*}
\mathbb{E}_{\mathbb{D}} \| \hat{f}-f_{0} \|_{L^{2}}^{2}  \leq 96b_{0}r \left( \frac{r}{L-1} \right)^{L-1} \sqrt{\frac{2L\log P}{n}} \left(1+\log(c_{1}\sqrt{n}) \sqrt{\E \| x \|_{\infty}^{2} }  \right),
\end{equation*}
where
\begin{equation*}
c_{1} = \frac{R}{6rL^{\frac{3}{2}}\sqrt{2\log P}}.
\end{equation*}
\end{theorem}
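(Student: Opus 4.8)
The plan is to obtain the stated bound by directly composing the two preceding lemmas. Lemma~\ref{oracle} reduces the control of the expected excess risk to the control of the Rademacher complexity $\mathcal{R}_{n}(\mathcal{F}_{r})$, and Lemma~\ref{rademacher} supplies precisely such a bound. Hence the theorem should follow by a single substitution, with the two invoked constants multiplying together.

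Concretely, I would proceed in three steps. First, under Assumption~\ref{bounded-assumption} I would apply the oracle inequality of Lemma~\ref{oracle} to the empirical risk minimizer $\hat{f}$ from \eqref{eq-mixed-sparse}, giving $\mathbb{E}_{\mathbb{D}}\|\hat{f}-f_{0}\|_{L^{2}}^{2}\leq 4b_{0}\mathcal{R}_{n}(\mathcal{F}_{r})$. Second, I would invoke Lemma~\ref{rademacher}, whose hypotheses are met here: Assumption~\ref{bounded-inputs} guarantees $\sup_{x\in\mathcal{X}}\|x\|_{\infty}\leq R<\infty$, so that the constant $c_{1}=R/(6rL^{3/2}\sqrt{2\log P})$ is well defined and $\E\|x\|_{\infty}^{2}\leq R^{2}<\infty$. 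This yields
\[
\mathcal{R}_{n}(\mathcal{F}_{r}) \leq 24r\left(\frac{r}{L-1}\right)^{L-1}\sqrt{\frac{2L\log P}{n}}\left(1+\log(c_{1}\sqrt{n})\sqrt{\E\|x\|_{\infty}^{2}}\right).
\]
Third, I would chain the two displays; the numerical constants combine as $4\times 24=96$, producing the factor $96b_{0}r$ and the claimed right-hand side.

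Because both lemmas are already established, the theorem itself carries no genuine obstacle; it is a mechanical composition, which is why it is termed straightforward. The substantive work lives upstream, and if I had to supply it from scratch the difficult step would be Lemma~\ref{rademacher}: one must pass from the covering number of the function class $\mathcal{F}_{r}$ to that of the parameter class $\varTheta_{r}$ through the Lipschitz bound of Lemma~\ref{lipschitz-parameter}, and then run a chaining (Dudley entropy integral) argument. The presence of the $\log(c_{1}\sqrt{n})$ term and the $\sqrt{\E\|x\|_{\infty}^{2}}$ factor indicates that the entropy integral is truncated at a scale set by $R$ and $n$, which is exactly where Assumption~\ref{bounded-inputs} and the high-dimensional dependence on $\log P$ enter. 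By contrast, Lemma~\ref{oracle} rests on a routine symmetrization argument and contributes only the factor $4b_{0}$.
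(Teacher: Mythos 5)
Your proposal is correct and matches the paper exactly: the paper itself states that the theorem follows by combining Lemmas \ref{oracle} and \ref{rademacher} and omits the proof as straightforward, which is precisely your composition with the constants multiplying as $4 \times 24 = 96$. Your observations about where the real work lies (Lemma \ref{rademacher} and the Dudley chaining argument) are also consistent with the structure of the paper's appendix.
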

Theorem \ref{model-convergence} provides an upper bound for the estimated model $\hat{f}$ via sparse deep neural networks and the true nonparametric regression function $f_{0}$. Note that the convergence of the model is captured essentially by the Rademacher complexity, and the upper bound of model convergence constitutes three important parts as follows.
\begin{remark}
\rm{(i)} The pre-factor $(r/(L-1))^{L-1}$ may increase exponentially with the number of layers $L$ when the average parameter norm $(r/(L-1))$ is larger than 1. This exponential dependence on the number of layers is due to the worst case Lipschitz property of deep neural networks and is manifested in results concerning the sample complexities of deep neural networks \citep{RN503,RN488,RN323,RN481}.

\rm{(ii)} The pre-factor $\sqrt{\log P}$ is due to the covering number of the $\ell_{1}$-norm constrained parameter space. Particularly, assume that all hidden layers have the same number of hidden units $h$ which is much smaller than the input dimension $d$. Then the total number of parameters $P=d*h+(L-2)h^{2}+h$ and the pre-factor $\sqrt{\log P}$ scales linearly with $\sqrt{\log d}$. This logarithm dependence on the number of parameters or the input dimension is especially important under the high-dimensional scenario.

\rm{(iii)} \cite{RN323} also use the covering number technique to bound the excess risk or $L^{2}$ error of $\ell_{1}$-norm constrained sparse deep neural networks. However, the term $\sqrt{\E \|x\|^{2}_{\infty}}$ in Theorem \ref{model-convergence} is better than the term $\sqrt{\sum_{i=1}^{n} \|x_{i}\|^{2}_{2}/n}$ or $\sqrt{\E \|x\|^{2}_{2}}$ in \cite{RN323} when the input dimension $d$ is high, since the term $\sqrt{\E \|x\|^{2}_{\infty}}$ scales with $\sqrt{\log d}$ but the term $\sqrt{\E \|x\|^{2}_{2}}$ scales with $\sqrt{d}$.
\end{remark}

\subsection{Convergence of partial derivatives}\label{sec-derivative}

Convergence of model is prerequisite but not sufficient for the convergence of partial derivatives. In the following, we introduce several  lemmas in preparation for establishing the convergence of partial derivatives.

\begin{lemma}[Bounded norm of partial derivatives] \label{lipschitz-input}
Assume the softplus activation function $\sigma(\cdot)$ is adopted. For any $x \in \mathcal{X}$ and $\Theta \in \varTheta_{r}$, it follows that
\begin{equation*}
\| \nabla_{x}f_{\Theta}(x) \|_{1} \leq  (r/L)^{L}.
\end{equation*}
\end{lemma}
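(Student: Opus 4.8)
The plan is to compute $\nabla_x f_\Theta$ explicitly through the chain rule, bound its $\ell_{1}$-norm by the product $\prod_{l=1}^{L}\|\theta_l\|_{1,1}$, and then convert this product into the stated bound by an AM--GM argument that exploits the constraint $\sum_{l=1}^{L}\|\theta_l\|_{1,1}=\norm{\Theta}_{1}\le r$. First I would fix the forward pass, writing $z_1=\theta_1 x$, $h_l=\sigma(z_l)$ and $z_{l+1}=\theta_{l+1}h_l$ for $l=1,\dots,L-1$, so that $f_\Theta(x)=\theta_L h_{L-1}$. Differentiating through every layer and transposing, the chain rule yields
\begin{equation*}
\nabla_x f_\Theta(x)=\theta_1^\top D_1\,\theta_2^\top D_2\cdots\theta_{L-1}^\top D_{L-1}\,\theta_L^\top,
\end{equation*}
where $D_l=\mathrm{diag}(\sigma'(z_l))$ is the diagonal matrix of activation derivatives at layer $l$. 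Since $\sigma$ is the softplus, its derivative is the sigmoid $\sigma'(t)=1/(1+e^{-t})\in(0,1)$, so every diagonal entry of $D_l$ lies in $(0,1)$; in particular $\|D_l u\|_1\le\|u\|_1$ for any vector $u$.

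The core of the argument is a single submultiplicativity estimate: for a matrix $\theta$ and a vector $u$,
\begin{equation*}
\|\theta^\top u\|_1=\sum_j\Big|\sum_k(\theta)_{kj}u_k\Big|\le\sum_k|u_k|\sum_j|(\theta)_{kj}|\le\|\theta\|_{1,1}\,\|u\|_1,
\end{equation*}
where the last step uses that each row sum $\sum_j|(\theta)_{kj}|$ is dominated by the total entrywise sum $\|\theta\|_{1,1}$. I would then run a backward induction starting from $\|\theta_L^\top\|_1=\|\theta_L\|_{1,1}$ (valid since $d_L=1$, so $\theta_L$ is a single row) and repeatedly applying $\|\theta_l^\top D_l u\|_1\le\|\theta_l\|_{1,1}\|D_l u\|_1\le\|\theta_l\|_{1,1}\|u\|_1$. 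This peels off one factor per layer and yields $\|\nabla_x f_\Theta(x)\|_1\le\prod_{l=1}^{L}\|\theta_l\|_{1,1}$.

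Finally, AM--GM applied to the $L$ nonnegative numbers $\|\theta_l\|_{1,1}$ gives $\prod_{l=1}^{L}\|\theta_l\|_{1,1}\le\big(\tfrac{1}{L}\sum_{l=1}^{L}\|\theta_l\|_{1,1}\big)^{L}\le(r/L)^{L}$, which is exactly the claim. The main subtlety I expect is getting the matrix-norm bookkeeping precisely right: one must use the full entrywise norm $\|\theta_l\|_{1,1}$ rather than an operator norm such as a maximal row or column sum, since it is this entrywise bound that lets the diagonal activation factors be absorbed for free while preserving a clean count of exactly $L$ factors. This accounting is also what makes the exponent here equal to $L$ (as opposed to $L-1$ in Lemma \ref{lipschitz-parameter}), because the final linear layer $\theta_L$ now contributes a genuine factor to the input gradient.
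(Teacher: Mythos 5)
Your proposal is correct and follows essentially the same route as the paper's proof: the same chain-rule factorization of $\nabla_{x}f_{\Theta}$, absorption of the sigmoid derivative factors using $\sigma'(\cdot)\in(0,1)$, the entrywise bound $\|\theta^{\top}u\|_{1}\leq\|\theta\|_{1,1}\|u\|_{1}$ (the paper just routes this through an intermediate $\ell_{\infty}$ step), and the concluding AM--GM estimate $\prod_{l=1}^{L}\|\theta_{l}\|_{1,1}\leq(r/L)^{L}$. The only differences are cosmetic: you peel factors from the right ($\theta_{L}^{\top}$ first) whereas the paper peels from the left, and you merge the paper's two-step norm comparison into one inequality.
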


\begin{lemma}[Bounded divergence of partial derivatives]\label{bounded-divergence}
Assume the softplus activation function $\sigma(\cdot)$ is adopted. For any $x \in \mathcal{X}$ and $\Theta \in \varTheta_{r}$, it follows that
\begin{equation*}
| \Delta_{x}f_{\Theta}(x) | \leq \frac{L}{4}\left(\frac{r}{L}\right)^{L} \max_{k\in\{2,\cdots,L-1\}} \left( \frac{r}{k} \right)^{k}.
\end{equation*}
\end{lemma}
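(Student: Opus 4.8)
The plan is to reduce the divergence (Laplacian) $\Delta_x f_\Theta=\sum_{a=1}^{d}\partial^{2}f_\Theta/\partial(x^a)^2$ to a sum over hidden units, and then bound it with the same telescoping-plus-AM-GM machinery already used for Lemma \ref{lipschitz-input}. First I would write the network forward as $z_0=x$, $a_l=\theta_l z_{l-1}$, $z_l=\sigma(a_l)$ for $l=1,\dots,L-1$, and $f_\Theta=\theta_L z_{L-1}$, and set $D_l=\mathrm{diag}(\sigma'(a_l))$. Differentiating the gradient $\nabla_x f_\Theta=\theta_1^\top D_1\theta_2^\top\cdots D_{L-1}\theta_L^\top$ once more in $x$ and applying the product rule — only the diagonal factors $D_l$ depend on $x$ — produces a sum of $L-1$ terms, one per hidden layer. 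A short induction (checked on $L=2,3$) gives the clean Hessian decomposition
$$\frac{\partial^{2} f_\Theta}{\partial x^a\partial x^b}=\sum_{l=1}^{L-1}\sum_{i=1}^{d_l} g_l^i\,\sigma''(a_l^i)\,\frac{\partial a_l^i}{\partial x^a}\frac{\partial a_l^i}{\partial x^b},$$
where $g_l=\partial f_\Theta/\partial z_l=\theta_{l+1}^\top D_{l+1}\cdots D_{L-1}\theta_L^\top$ is the back-propagated sensitivity and $\partial a_l^i/\partial x$ is the input-gradient of the $i$-th pre-activation. Taking the trace (set $a=b$ and sum over $a$) collapses each rank-one outer product into a squared Euclidean norm, so $\Delta_x f_\Theta=\sum_{l=1}^{L-1}\sum_i g_l^i\,\sigma''(a_l^i)\,\|\partial a_l^i/\partial x\|_2^2$.

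Next I would bound the three ingredients. For the softplus one computes $\sigma'=\mathrm{sigmoid}\in(0,1)$ and $\sigma''=\sigma'(1-\sigma')\in(0,1/4]$, hence $|\sigma''|\le 1/4$. The factors $g_l$ and $\partial a_l^i/\partial x$ are products of the $\theta_k$ interleaved with diagonal matrices $D_k$ whose entries lie in $(0,1)$, so exactly the telescoping argument behind Lemma \ref{lipschitz-input} (a row-vector/matrix product loses at most a factor $\|\theta_k\|_{1,1}$ per layer, since $|\sigma'|\le 1$) yields $\|g_l\|_1\le\prod_{k=l+1}^{L}\|\theta_k\|_{1,1}$ and $\|\partial a_l^i/\partial x\|_1\le\|(\theta_l)_{i\cdot}\|_1\prod_{k=1}^{l-1}\|\theta_k\|_{1,1}$. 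Using $\|\cdot\|_2\le\|\cdot\|_1$ and then $\|(\theta_l)_{i\cdot}\|_1\le\|\theta_l\|_{1,1}$ twice to collapse the sensitivity-weighted unit-sum, I obtain $\sum_i|g_l^i|\,\|\partial a_l^i/\partial x\|_2^2\le\big(\prod_{k=1}^{l}\|\theta_k\|_{1,1}\big)^2\prod_{k=l+1}^{L}\|\theta_k\|_{1,1}$.

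Finally I would apply AM-GM with a deliberate \emph{double use} of the budget: rewrite the right-hand side as $\big(\prod_{k=1}^{L}\|\theta_k\|_{1,1}\big)\big(\prod_{k=1}^{l}\|\theta_k\|_{1,1}\big)$, bound the first factor by $(r/L)^L$ (AM-GM over all $L$ layers, using $\sum_k\|\theta_k\|_{1,1}\le r$) and the second by $(r/l)^l$ (AM-GM over the first $l$ layers, whose partial sum is also $\le r$). Thus the layer-$l$ contribution is at most $\tfrac14(r/L)^L(r/l)^l$; summing over $l=1,\dots,L-1$, pulling out $\tfrac14(r/L)^L$, and bounding the residual sum by the number of layers times its largest term delivers $\tfrac{L}{4}(r/L)^L\max_k(r/k)^k$.

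The main obstacle is establishing and correctly organizing the Hessian decomposition: the crux is that each second-order term touches exactly one nonlinearity (hence a single $\sigma''$, bounded by $1/4$), while the trace fuses the two input-Jacobians into $\|\partial a_l^i/\partial x\|_2^2$; moreover the sensitivities $g_l^i$ couple to the per-unit Jacobian norms, and this coupling must be decoupled through $\|(\theta_l)_{i\cdot}\|_1\le\|\theta_l\|_{1,1}$ so that the product structure — and hence the two separate AM-GM applications — survives. The remaining delicacy is the bookkeeping at the boundary layers, which pins down the exact index range of the maximum and the precise prefactor.
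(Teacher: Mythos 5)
Your proposal is correct and takes essentially the same route as the paper: your per-layer Hessian decomposition $\Delta_{x}f_{\Theta}=\sum_{l}\sum_{i}g_{l}^{i}\,\sigma''(a_{l}^{i})\,\|\partial a_{l}^{i}/\partial x\|_{2}^{2}$ is exactly the paper's chain-rule expansion over the $h'_{k}$ (the paper just works coordinate-wise in $x_{i}$ and sums at the end), and the telescoping $\|\cdot\|_{1,1}$ bounds, the softplus bound $\sigma''\in(0,1/4)$, and the closing double AM-GM are identical. The ``boundary bookkeeping'' you flag at $l=1$ is a genuine subtlety, but it is present in the paper's own proof as well: there the sum $\sum_{k=1}^{L-1}\prod_{q=2}^{k}\|\theta_{q}\|_{1,1}$ silently becomes $\sum_{k=2}^{L-1}\prod_{q=2}^{k}\|\theta_{q}\|_{1,1}$ across an equality sign, dropping the $k=1$ term (whose separate bound is $r$ rather than $(r/k)^{k}$ with $k\geq 2$), so what both arguments actually establish is the bound with the maximum taken over $k\in\{1,\cdots,L-1\}$.
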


\begin{remark}\label{remark-bounded-derivatives}
\rm{(i)} It is well known in real analysis that function convergence does not guarantee derivative convergence. For example, let $f_{n}(x)=n^{-k}\sin(nx)$ and $f_{0}(x)=0$, $k \in (0,1)$, then $f_{n}$ converges to $f_{0}$ as $n \rightarrow \infty$, but $\nabla_{x}f_{n}$ does not converges to $\nabla_{x}f_{0}$. This pathological example is due to the fact that the first or second derivative is not bounded. Therefore, as we will show in subsequent results, the requirements on bounded norm of partial derivatives (cf. Lemma \ref{lipschitz-input}) and bounded divergence of partial derivatives (cf. Lemma \ref{bounded-divergence}) are crucial for the convergence of partial derivatives.

\rm{(ii)} In the proof of Lemma \ref{bounded-divergence}, we use a key fact of the softplus activation function that the second derivatives of hidden layers are well bounded, that is, $h''_{k} = \sigma''(\cdots) \in (0,1/4)$. The bounded second derivative does not hold for the relu activation function, since $relu(\cdot)$ is not smooth and the second derivative can be infinite. Therefore, the proof of Lemma \ref{bounded-divergence} implies that relu is not suitable for derivative estimation and adopting a smooth activation function such as softplus is necessary. 
\end{remark}
%Further, we also need the following assumptions and \cite[Lemma ]{RN357}.
\begin{assumption}[Boundary condition]\label{boundary-assumption}
The true regression function $f_{0}$ and the estimated function $\hat{f}$ are both assumed to have zero normal derivative on the boundary, that is, $\nabla_{x}f_{0}\cdot\vec{n}=0$ and $\nabla_{x}\hat{f}\cdot\vec{n}=0$, where $\vec{n}$ is the unit normal to the boundary.
\end{assumption}
Since it is not possible to estimate the partial derivatives on the boundary $\partial \mathcal{X}$ out of where there are no samples, we need the boundary assumption.
\begin{assumption}[Bounded derivatives of probability density]\label{bounded-density}
The $L^{\infty}$-norm of the logarithm of probability density is assumed to be bounded, that is, $\sup \{ |\nabla_{x_{i}} \log p_x(x)|: x\in \mathcal{X}, \forall i\in \{1,\cdots,d\} \} \leq b_{1}$, where $b_{1}$ is an absolute positive constant.
\end{assumption}

%\begin{remark}
It is worth noting that Assumption \ref{bounded-density} is reasonable. For instance, assume that $x$ obeys a truncated normal distribution such that $\mathcal{X}$ is bounded. When all the elements of $x$ are independent, it holds that $|\nabla_{x_{i}} \log p_x(x)|=|x_{i}-\mu_{i}|/\sigma_{i}^{2}$, and thus $\sup |\nabla_{x_{i}} \log p_x(x)|$ is bounded provided that $x \in \mathcal{X}$ is bounded (i.e., Assumption \ref{bounded-inputs} is satisfied).
%\end{remark}

\begin{lemma}[Green's formula \citep{RN357}]\label{green-lemma}
Let $f,g \in L^{2}(p_x(x))$ and $\nabla_{x}f,\nabla_{x}g \in L^{2}(p_x(x))$ with $\nabla_{x}f\cdot\vec{n}=0$ on the boundary $\partial \mathcal{X}$, where $\vec{n}$ is the unit normal to the boundary. Then it follows that
\begin{equation*}
-\langle \nabla_{x}f,\nabla_{x}g \rangle_{L^{2}} = \langle \Delta_{x}f+\nabla_{x}f \cdot \nabla_{x} \log p_x(x), g \rangle_{L^{2}}.
\end{equation*}
\end{lemma}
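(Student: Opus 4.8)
The plan is to recognize the stated identity as a density-weighted version of Green's first identity and to derive it by applying the divergence theorem to a single, carefully chosen vector field whose normal flux across $\partial\mathcal{X}$ is controlled precisely by the normal derivative $\nabla_{x}f\cdot\vec{n}$, which vanishes under Assumption \ref{boundary-assumption}. The density weight $p_x$ appearing in the inner product $\langle\cdot,\cdot\rangle_{L^{2}}$ is what produces the extra drift term $\nabla_{x}f\cdot\nabla_{x}\log p_x$ relative to the unweighted formula, so the whole proof reduces to one integration by parts plus the elementary identity $\nabla_{x}p_x=p_x\,\nabla_{x}\log p_x$.

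Concretely, I would introduce the vector field $W=g\,p_x\,\nabla_{x}f$ on $\mathcal{X}$ and expand its divergence by the product rule, writing $\nabla_{x}(g\,p_x)=p_x\,\nabla_{x}g+g\,\nabla_{x}p_x$, to obtain
\[
\nabla_{x}\cdot W = p_x\,\nabla_{x}f\cdot\nabla_{x}g + g\,\nabla_{x}f\cdot\nabla_{x}p_x + g\,p_x\,\Delta_{x}f .
\]
Integrating over $\mathcal{X}$ and invoking the divergence theorem turns the left-hand side into a boundary integral,
\[
\int_{\mathcal{X}}\nabla_{x}\cdot W\,dx=\int_{\partial\mathcal{X}}g\,p_x\,(\nabla_{x}f\cdot\vec{n})\,dS=0 ,
\]
which is zero because $\nabla_{x}f\cdot\vec{n}=0$ on $\partial\mathcal{X}$ by Assumption \ref{boundary-assumption}. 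Rewriting the middle volume term with $\nabla_{x}p_x=p_x\,\nabla_{x}\log p_x$ then yields
\[
0=\int_{\mathcal{X}}p_x\,\nabla_{x}f\cdot\nabla_{x}g\,dx+\int_{\mathcal{X}}g\,p_x\left(\Delta_{x}f+\nabla_{x}f\cdot\nabla_{x}\log p_x\right)dx ,
\]
and recalling the definition of $\langle\cdot,\cdot\rangle_{L^{2}}$ this is exactly the claimed equality after moving the first integral to the other side. Finiteness of every term follows from the hypotheses $\nabla_{x}f,\nabla_{x}g,f,g\in L^{2}(p_x)$ together with Assumption \ref{bounded-density}, since boundedness of $\nabla_{x}\log p_x$ makes the cross term $(\nabla_{x}f\cdot\nabla_{x}\log p_x)\,g$ integrable via Cauchy--Schwarz.

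The main obstacle is not the algebra but justifying the divergence theorem under the stated $L^{2}$ integrability assumptions rather than pointwise $C^{2}$ smoothness, and dealing with the boundary $\partial\mathcal{X}$. I would handle this by the standard route: assume $\mathcal{X}$ has a Lipschitz (or $C^{1}$) boundary, approximate $f$ and $g$ by smooth functions, apply the classical Green identity to the approximants, and pass to the limit using the $L^{2}(p_x)$ control on the functions and their gradients. In our setting this technical step is in fact benign, because the network estimates $\hat f$ and $f_{0}$ built from the softplus activation are genuinely smooth, so the classical theorem applies directly and no regularization is required; this is precisely the level of generality established in the cited source \citep{RN357}, which is why the lemma may be invoked as stated.
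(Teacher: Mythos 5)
Your proposal is correct and follows essentially the same route as the paper's own proof: both apply the divergence theorem to the vector field $g\,p_x\,\nabla_{x}f$, kill the boundary term using $\nabla_{x}f\cdot\vec{n}=0$, and convert $\nabla_{x}p_x$ into $p_x\,\nabla_{x}\log p_x$ to produce the drift term. Your added remarks on justifying the divergence theorem under the stated integrability (and noting that softplus networks are smooth, so the classical theorem suffices) are a welcome extra layer of rigor but do not change the argument.
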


With these Lemmas, we can now establish the other main result on the convergence of model partial derivatives with respect to inputs.
\begin{theorem}[Convergence of partial derivatives]\label{convergence-gradient}
Let $\hat{f}$ be the estimated model obtained from \eqref{eq-mixed-sparse}. Then under Assumption \ref{bounded-assumption}, \ref{bounded-inputs}, \ref{boundary-assumption} and \ref{bounded-density}, it follows that
\begin{equation*}
\begin{aligned}
&\E_{\D} \|\nabla_{x}\hat{f} - \nabla_{x}f_{0}\|_{L^{2}}^{2} \leq \frac{1}{n^{1/4}}\left(\frac{r}{L}\right)^{2L}\left(2+ \frac{L^{2}}{8} \max_{k\in\{2,\cdots,L-1\}} \left( \frac{r}{k} \right)^{2k}\right) \\
&+  48(1+b_{1})b_{0}r \left( \frac{r}{L-1} \right)^{L-1} \frac{\sqrt{2L\log P}}{n^{1/4}} \left(1+\log(c_{1}\sqrt{n}) \sqrt{\E \| x \|_{\infty}^{2} }  \right),
\end{aligned}
\end{equation*}
where
\begin{equation*}
c_{1} = \frac{R}{6rL^{\frac{3}{2}}\sqrt{2\log P}}.
\end{equation*}
\end{theorem}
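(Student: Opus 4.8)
The plan is to reduce control of the derivative error $\|\nabla_x\hat f-\nabla_x f_0\|_{L^2}^2$ to the function error $\|\hat f-f_0\|_{L^2}^2$, which Theorem \ref{model-convergence} already controls. Write $g=\hat f-f_0$. The conceptual starting point is the warning in Remark \ref{remark-bounded-derivatives}(i): $L^2$-convergence of functions does not by itself force gradient convergence, and what makes the reduction possible here is the availability of \emph{uniform} bounds on both the first- and second-order derivatives of every element of $\mathcal F_r$, namely $\|\nabla_x f_\Theta\|_1\le (r/L)^L$ (Lemma \ref{lipschitz-input}) and $|\Delta_x f_\Theta|\le \frac{L}{4}(r/L)^L\max_k(r/k)^k$ (Lemma \ref{bounded-divergence}). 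These supply the ``bounded derivative'' hypotheses that exclude the oscillatory counterexample.

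First I would invoke Green's formula (Lemma \ref{green-lemma}) with both arguments taken equal to $g$. This is legitimate because $g$ and $\nabla_x g$ lie in $L^2(p_x)$ (the network and its gradient are bounded on the bounded domain $\mathcal X$), and because Assumption \ref{boundary-assumption} gives $\nabla_x\hat f\cdot\vec n=\nabla_x f_0\cdot\vec n=0$, hence $\nabla_x g\cdot\vec n=0$ on $\partial\mathcal X$. This yields the identity $\|\nabla_x g\|_{L^2}^2=-\langle \Delta_x g+\nabla_x g\cdot\nabla_x\log p_x,\,g\rangle_{L^2}$, which re-expresses the quantity we want to bound as an inner product tested against $g$ itself, precisely the object Theorem \ref{model-convergence} controls.

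Next I would bound the two pieces by Cauchy--Schwarz in $L^2(p_x)$. For the Laplacian piece, the triangle inequality and Lemma \ref{bounded-divergence} give $\|\Delta_x g\|_{L^2}\le\|\Delta_x g\|_{L^\infty}\le 2\cdot\frac{L}{4}(r/L)^L\max_k(r/k)^k=:C_1$. For the drift piece, the pointwise estimate $|\nabla_x g\cdot\nabla_x\log p_x|\le\|\nabla_x g\|_1\,\|\nabla_x\log p_x\|_\infty\le 2(r/L)^L b_1=:C_2$ follows from Lemma \ref{lipschitz-input} and Assumption \ref{bounded-density}. Combining gives $\|\nabla_x g\|_{L^2}^2\le(C_1+C_2)\|g\|_{L^2}$. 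The final step is to decouple the deterministic prefactors from the random function error: applying Young's inequality $ab\le\tfrac12(n^{-1/4}a^2+n^{1/4}b^2)$ \emph{separately} to the $C_1$ and $C_2$ terms converts the bound into $\tfrac12 n^{-1/4}(C_1^2+C_2^2)+n^{1/4}\|g\|_{L^2}^2$; taking $\E_\D$ and substituting $\E_\D\|g\|_{L^2}^2\le M$ from Theorem \ref{model-convergence} (with $M=\mathcal O(n^{-1/2})$) produces the two displayed terms, the $\tfrac12 C_1^2$ contribution yielding the $\frac{L^2}{8}(r/L)^{2L}\max_k(r/k)^{2k}$ factor and the $n^{1/4}M$ contribution yielding the $\sqrt{\log P}$-type factor divided by $n^{1/4}$.

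The conceptual heart, and the main obstacle, is the Green's-formula reduction: it is what turns a statement about gradients into a statement about the function we can already control, and it is valid only because of the boundary assumption together with the uniform first- and second-derivative bounds. The rate deterioration from $n^{-1/2}$ to $n^{-1/4}$ is then forced by the shape of the inequality $\|\nabla_x g\|_{L^2}^2\lesssim\|g\|_{L^2}=\sqrt{\|g\|_{L^2}^2}$, since the square root of an $\mathcal O(n^{-1/2})$ quantity is $\mathcal O(n^{-1/4})$; the Young's-inequality step merely exposes this trade-off with explicit constants. I would expect the bookkeeping of the absolute constants, in particular the precise power of $b_1$ entering the second term and the choice of Young's weight $n^{\pm1/4}$ that balances the two contributions, to be the only genuinely fiddly part, the remainder being direct substitution of the earlier lemmas.
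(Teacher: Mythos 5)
Your proposal is correct and takes essentially the same route as the paper: the Green's-formula identity for $g=\hat f-f_0$ (the paper applies Lemma \ref{green-lemma} separately to $\hat f$ and to $f_0$ and adds, which is the same identity by bilinearity), followed by Cauchy--Schwarz, Young's inequality with weights $n^{\pm 1/4}$, and substitution of Theorem \ref{model-convergence}. The only difference is bookkeeping of $b_1$: you absorb it into the pointwise bound $C_2=2b_1(r/L)^L$, so your final inequality carries $2b_1^2+\tfrac{L^2}{8}\max_k(r/k)^{2k}$ in the $n^{-1/4}$ bias term and a bare factor $96$ on the model-convergence term, whereas the paper keeps $\|\nabla_x\log p_x\|_{\infty}$ paired with $\hat f-f_0$ inside the Cauchy--Schwarz so that $b_1$ multiplies $\|\hat f-f_0\|_{L^2}^2$, which is how the stated constants $2$ and $48(1+b_1)$ arise (the paper's own proof actually ends with $48(1+b_1^2)$).
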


\begin{remark}
The convergence rate of partial derivatives constitutes of two parts. The first part is due to the bounded norm of the gradient and the bounded divergence of the gradient field, both of which are established in Lemma \ref{lipschitz-input} and Lemma \ref{bounded-divergence}, respectively. The second part comes from the convergence of model that is established in Theorem \ref{model-convergence}. In a word, Theorem \ref{convergence-gradient} shows that in order to guarantee the convergence of partial derivatives, it is not sufficient to rely on the model convergence alone, but the boundedness of the gradient and the divergence of the gradient field are also required. This result coincides with the discussion in Remark \ref{remark-bounded-derivatives}. In addition, it should be noted that the convergence rate of partial derivatives is slower than the convergence rate of model by $n^{1/4}$.
\end{remark}

\section{Experiments}
In this section, we perform experiments to illustrate the theoretical results. We demonstrate that $\ell_{1}$-norm constrained sparse deep neural networks is efficient in nonparametric estimation under high-dimensional scenarios, and the adoption of smooth activation functions is crucial for nonparametric estimation.

%\subsection{Data generation}
In the well-specified case, the true regression function is assumed to be a neural network $f_0(x)=\theta_L\sigma(\theta_{L-1}\cdots\sigma(\theta_1 x)\cdots)$ and the data are generated via $y_{i}=f_{0}(x_{i})+\xi_{i}$. Each layer's parameters of $f_{0}$ are generated from a normal distribution with standard deviation being $\sqrt{2/h}$, where $h$ represents the number of input features of each layer. We set the number of hidden units of each layer as 10. The input prediction variable $x$ and random noise $\xi$ are generated from truncated normal distributions with standard deviation 1 and 0.1 respectively. We truncate a normal distribution at a large absolute value (10 times the standard deviation) and uniformly scaling the density values inside the bounded interval.
For the high-dimensional input prediction variable $x \in \R^{d}$, we assume only $s$ sparse variables are responsible for the regression function $f_0(x)$. We set $d=100$ and $s=5$ across all experiments. We just keep the first layer's parameters $\theta_{1}$ to be nonzeros corresponding to the $s$ relevant sparse variables and set the other parameters corresponding to irrelevant variables as zeros.

The number of training samples varies from 50 to 100, and the number of testing samples is set as 10000. We report the $L^{2}$ error of testing samples for model prediction and derivative estimation. We use a large testing sample size to approximate the expected $L^{2}$ error. The training process is repeated with random training samples for 100 times and results are reported via averaging.

%In the theoretical part, we assume that the sample space of input variable $\mathcal{X}$ is bounded. In experiments, we generated the input variable $x$ from a normal distribution which is unbounded. This seems like a contradictory, but we can just truncate a normal distribution at a large absolute value and uniformly scaling the density values inside the bounded interval. Since extreme values of a normal distribution rarely happen, we just generate input variable $x$ from a normal distribution for simplicity and the $x$ is bounded within a certain value with a large probability.

%\subsection{Convergence of the model and partial derivatives}

\begin{figure}[htbp]
\includegraphics[width=\textwidth]{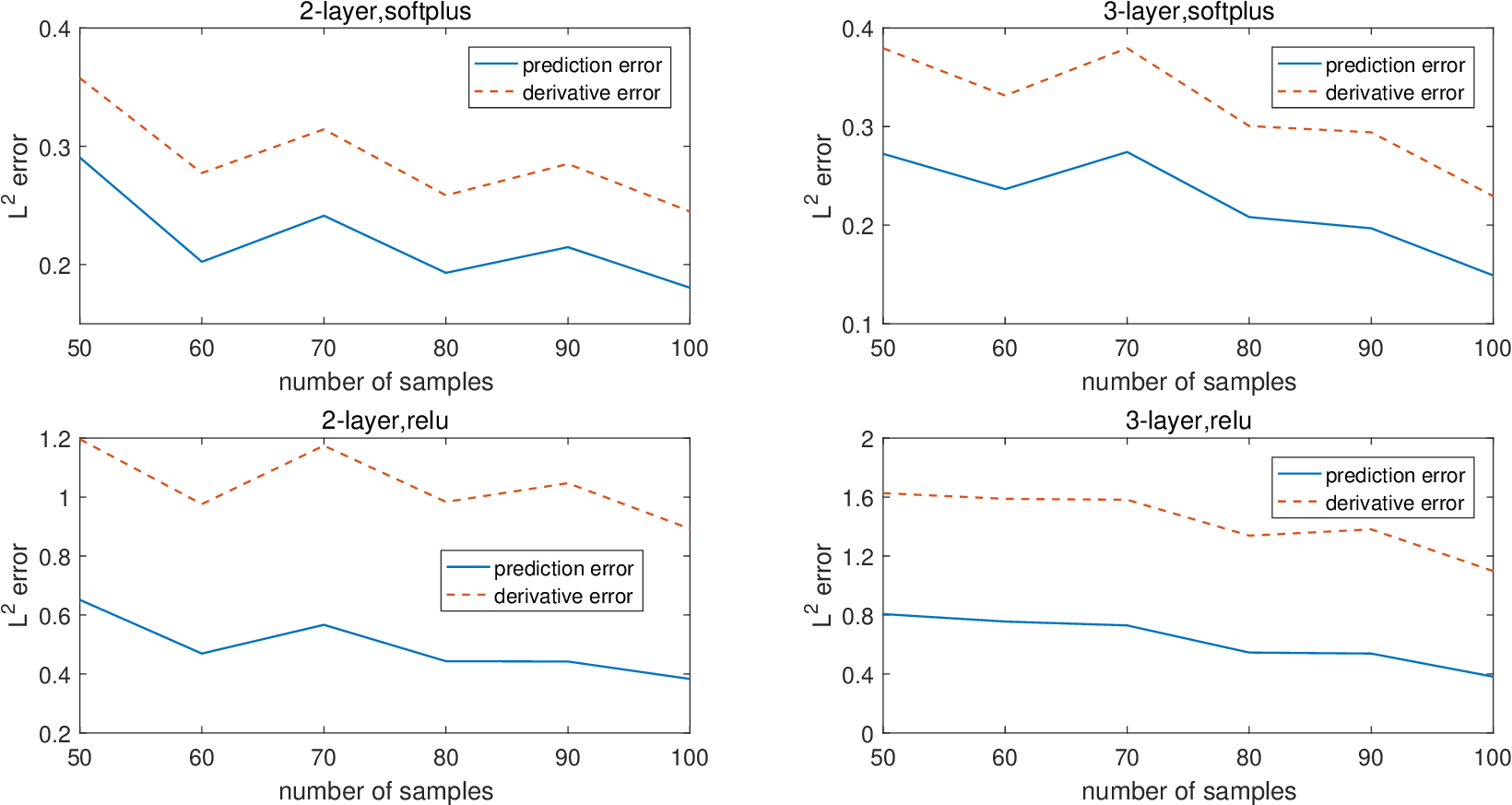}
\caption{$L^{2}$ error of model prediction and derivative estimation.}
\label{fig1}
\end{figure}

We compare the results of two nonlinear activation function, i.e., the $\text{relu}(x)=\max\{0,x\}$ and the smooth $\text{softplus}(x)=\log(1+\exp(x))-\log2$. For the 2-layer and 3-layer neural networks, results on $L^{2}$ errors of model prediction and derivative estimation are displayed in Fig. \ref{fig1}. As we can see from Fig. \ref{fig1}, both the prediction error and the derivative estimation error decrease as the sample size increases. However, the difference between the prediction error and the derivative estimation error for the softplus function is much smaller than that of the relu function, a phenomenon which indicates the significance of the smooth function in nonparametric estimation of the model derivatives.

%\begin{figure}[htbp]
%\centering
%\includegraphics[width=\textwidth]{fig/li5.pdf}
%\caption{Comparison of decompositions for the SCAD and MCP regularizers in the missing data case.}
%\label{f-com-mis}
%\end{figure}

\section{Conclusion}
In this study, we demonstrate that despite the unidentifiability of deep neural networks in parameter estimation, one can still achieve convergence of nonparametric estimation of partial derivatives with respect to inputs for sparse deep neural networks. The established convergence of nonparametric estimation is of potential significance for the interpretability of deep neural networks.

There are also some future works to be considered. First, the theoretical results are based on the bounded domain and the boundary assumptions. It would be interesting to weaken these assumptions to build convergence results. Second, the theoretically established convergence rate of partial derivatives is slower than that of model by $n^{-1/4}$. It is natural to ask under what conditions can faster convergence rate of partial derivatives be guaranteed. Thirdly, one can also utilize the established convergence of partial derivatives to achieve the consistency of nonparametric variable selection.
 
%\section*{Funding}
%This work was supported by the Natural Science Foundation of China (Grant Nos. 62103329 and 12201496)

%\section*{Appendix of proof}
\begin{appendix}

\section{Proof of Section \ref{sec-model}}\label{sec-appA}

\begin{proof}[Proof of Lemma \ref{oracle}]
It follows that the excess risk equals to the $L^{2}$ error
\begin{equation*}
\begin{aligned}
L(\hat{f})-L(f_{0}) &= \E_{(x,y) \sim p(x,y)}(\hat{f}(x)-y)^{2} - \E_{(x,y) \sim p(x,y)}(f_{0}(x)-y)^{2} \\
&=\E_{(x,y) \sim p(x,y)}(\hat{f}(x)-f_{0}(x)-\xi)^{2} - \E_{(x,y) \sim p(x,y)}(f_{0}(x)-f_{0}(x)-\xi)^{2} \\
&=\E_{x \sim p_x(x)}(\hat{f}(x)-f_{0}(x))^{2} - \E_{(x,y) \sim p(x,y)}[(\hat{f}(x)-f_{0}(x))\xi] \\
&=\E_{x \sim p_x(x)}(\hat{f}(x)-f_{0}(x))^{2} \\
&=\| \hat{f} - f_{0} \|_{L^{2}}^{2}.
\end{aligned}
\end{equation*}
Decompose the $L^{2}$ error as follows:
\begin{equation}\label{decompose}
\begin{aligned}
\| \hat{f} - f_{0} \|_{L^{2}}^{2} &= L(\hat{f})-L(f_{0}) \\
&=[L(\hat{f}) - \hat{L}(\hat{f})] + [\hat{L}(\hat{f})-\hat{L}(f_{0})] + [\hat{L}(f_{0})-L(f_{0})].
\end{aligned}
\end{equation}
Since $\norm{ \Theta_{0} }_{1}\leq r$, it follows from the global optimality of $\hat{f}$ that
\begin{equation}\label{optimality}
\hat{L}(\hat{f})  \leq \hat{L}(f_{0}).
\end{equation}
Combining \eqref{decompose} and \eqref{optimality} yields that
\begin{equation*}%\label{temp1}
\begin{aligned}
&\| \hat{f} - f_{0} \|_{L^{2}}^{2} \leq [L(\hat{f}) - \hat{L}(\hat{f})] + [\hat{L}(f_{0})-L(f_{0})].
\end{aligned}
\end{equation*}
Taking expectations with respect to the dataset $\D$ and noting the fact that $\E_{\D}\hat{L}(f_{0})=L(f_{0})$, we obtain that
\begin{equation*}%\label{temp3}
\begin{aligned}
\E_{\D}\| \hat{f} - f_{0} \|_{L^{2}}^{2} \leq \E_{\D}[L(\hat{f}) - \hat{L}(\hat{f})].
\end{aligned}
\end{equation*}
Set $\mathcal{L}_{r}=\{(x,y) \rightarrow \ell(f(x),y):f \in \mathcal{F}_{r}\}$ to be the family of loss functions. The term $\E_{\D}[L(\hat{f}) - \hat{L}(\hat{f})]$ is bounded via Rademacher complexity as follows
\begin{equation*}
\begin{aligned}
&\E_{\D}[L(\hat{f}) - \hat{L}(\hat{f})] \leq \E_{\D}\left[\sup\limits_{f \in \mathcal{F}_{r}}(L(f) - \hat{L}(f))\right] \\
&= \E_{\D}\left[\sup\limits_{\ell \in \mathcal{L}_{r}}\left(\E(\ell) -\frac{1}{n}\sum_{i=1}^{n}\ell(f(x_{i}),y_{i})\right)\right] \\
&=  \E_{\D}\left[\sup\limits_{\ell \in \mathcal{L}_{r}} \left(\E_{\D'} \left[\frac{1}{n}\sum_{i=1}^{n}\ell(f(x_{i}'),y_{i}')  -\frac{1}{n}\sum_{i=1}^{n}\ell(f(x_{i}),y_{i})\right]\right)\right] \\
&\leq \E_{\D}\E_{\D'}\left[ \sup\limits_{\ell \in \mathcal{L}_{r}} \left( \frac{1}{n}\sum_{i=1}^{n}\ell(f(x_{i}'),y_{i}')  -\frac{1}{n}\sum_{i=1}^{n}\ell(f(x_{i}),y_{i})\right)\right] \\
&= \E_{\D}\E_{\D'}\E_{\epsilon}\left[ \sup\limits_{\ell \in \mathcal{L}_{r}} \left( \frac{1}{n}\sum_{i=1}^{n} \epsilon_{i} \ell(f(x_{i}'),y_{i}')  -\frac{1}{n}\sum_{i=1}^{n} \epsilon_{i} \ell(f(x_{i}),y_{i})\right)\right] \\
&\leq \E_{\D}\E_{\D'}\E_{\epsilon}\left[ \sup\limits_{\ell \in \mathcal{L}_{r}} \left( \frac{1}{n}\sum_{i=1}^{n} \epsilon_{i} \ell(f(x_{i}'),y_{i}') \right) +  \sup\limits_{\ell \in \mathcal{L}_{r}} \left(\frac{1}{n}\sum_{i=1}^{n} -\epsilon_{i} \ell(f(x_{i}),y_{i})\right)\right] \\
&= 2\E_{\D}\E_{\epsilon}\left[ \sup\limits_{\ell \in \mathcal{L}_{r}} \left( \frac{1}{n}\sum_{i=1}^{n} \epsilon_{i} \ell(f(x_{i}),y_{i}) \right) \right] \\
&=2\mathcal{R}_{n}(\mathcal{L}_{r}).
\end{aligned}
\end{equation*}
Since the loss function $\ell(f(x),y)=((f(x)-y))^{2}$ is $2b_{0}$-Lipschitz by Assumption \ref{bounded-assumption}, we have by the contraction inequality of Rademacher complexity of Lipschitz loss functions that
\begin{equation*}%\label{uniform-convergence1}
\E_{\D}\| \hat{f} - f_{0} \|_{L^{2}}^{2} \leq \E_{\D}[L(\hat{f}) - \hat{L}(\hat{f})] \leq 2\mathcal{R}_{n}(\mathcal{L}_{r}) \leq 4b_{0}\mathcal{R}_{n}(\mathcal{F}_{r}).
\end{equation*}
The proof is complete.
\end{proof}

\begin{proof}[Proof of Lemma \ref{lipschitz-parameter}]
Denote
\begin{equation*}
\nabla_{\theta}f_{\Theta}(x) = \left(  \frac{\partial f}{\partial \theta_{1}}\Big|_{\text{vec}}^{\top}, \cdots,  \frac{\partial f}{\partial \theta_{L}}\Big|_{\text{vec}}^{\top}       \right)^{\top},
\end{equation*}
where for $i=1,2,\cdots,L$, $\frac{\partial f}{\partial \theta_{i}}\Big|_{\text{vec}}$ is the vectorized form of the matrix $\frac{\partial f}{\partial \theta_{i}}$.  
Since the Lipschitz constant equals to $ \sup\{ \| \nabla_{\theta}f_{\Theta}(x) \|_{2}: \Theta \in \varTheta_{r}  \}$, it suffice to analyze the $\ell_{2}$-norm of $\nabla_{\theta}f_{\Theta}(x)$ as follows
\begin{equation*}
\| \nabla_{\theta}f_{\Theta}(x) \|_{2} = \sqrt{ \sum_{l=1}^{L} \| \frac{\partial f}{\partial \theta_{l}} \|_{F}^{2}  }.
\end{equation*}
For a deep neural network $f_\Theta(x)=\theta_L\sigma(\theta_{L-1}\cdots\sigma(\theta_1 x)\cdots)$, denote the activations of the $l$-th hidden layer as $h_{l}=\sigma(\theta_{l}\cdots\sigma(\theta_1 x)\cdots)$ with $h_{0}(x)=x$. Then the derivatives of the $l$-th hidden layer is $h'_{l}=\sigma'(\theta_{l}\cdots\sigma(\theta_1 x)\cdots)$, where $\sigma'(\cdot)$ is the derivative of the activation function. According to the chain rule, the gradients with respect to parameters equal to 
\begin{equation*}
\frac{\partial f}{\partial \theta_{l}} =(h'_{l} \odot \theta_{l+1}^{\top} \cdots \theta_{L-1}^{\top} h'_{L-1} \odot \theta_{L}^{\top})  h_{l-1}^{\top}.
\end{equation*}
When the activation function is the softplus function $\sigma(x)=\log(1+\exp(x))-\log2$, the derivative $\sigma'(\cdot)$ is the sigmoid function $\sigma'(x)=\frac{\exp(x)}{1+\exp(x)}$, thus $\sigma'(\cdot) \in (0,1)$ and $h'_{l} \in (0,1)$. Recall the  $\| \cdot \|_{1,1}$ matrix norm in \eqref{matrix-1norm}, and the Frobenius-norm of gradients is bounded layer by layer as:
\begin{equation}\label{lip-model1}
\begin{aligned}
\| \frac{\partial f}{\partial \theta_{l}} \|_{F} &= \sqrt{\text{trace}(\frac{\partial f}{\partial \theta_{l}}  \frac{\partial f}{\partial \theta_{l}}^{\top})} \\
&= \| h'_{l} \odot \theta_{l+1}^{\top} \cdots \theta_{L-1}^{\top} h'_{L-1} \odot \theta_{L}^{\top} \|_{2}  \| h_{l-1} \|_{2} \\
&\leq \| \theta_{l+1}^{\top}  h'_{l+1} \cdots \theta_{L-1}^{\top} h'_{L-1} \odot \theta_{L}^{\top} \|_{2}  \| h_{l-1} \|_{2}  \quad (\because h'_{l} \in (0,1))\\
&\leq \| \theta_{l+1} \|_{1,1}  \| h'_{l+1} \cdots \theta_{L-1}^{\top} h'_{L-1} \odot \theta_{L}^{\top} \|_{\infty}  \| h_{l-1} \|_{2} \\
&\leq \| \theta_{l+1} \|_{1,1}  \| h'_{l+1} \cdots \theta_{L-1}^{\top} h'_{L-1} \odot \theta_{L}^{\top} \|_{2}  \| h_{l-1} \|_{2}  \quad (\because \|\cdot\|_{\infty} \leq \|\cdot\|_{2} ) \\
&\leq \| h_{l-1} \|_{2} \prod_{k=l+1}^{L-1} \| \theta_{k} \|_{1,1} \|h'_{L-1} \odot \theta_{L}^{\top}\|_{\infty} \\
&\leq \| h_{l-1} \|_{2} \prod_{k=l+1}^{L-1} \| \theta_{k} \|_{1,1} \| \theta_{L}\|_{\infty} \leq \| h_{l-1} \|_{2} \prod_{k=l+1}^{L-1} \| \theta_{k} \|_{1,1} \| \theta_{L}\|_{1} \\
& = \| h_{l-1} \|_{2} \prod_{k=l+1}^{L} \| \theta_{k} \|_{1,1},
\end{aligned}
\end{equation}
where the second inequality follows from H$\ddot{o}$lder's inequality that $\|Ax\|_{2} = \sqrt{ \sum_{i} \langle A_{i\cdot},x \rangle ^{2} } \leq \sqrt{ \sum_{i} \|A_{i\cdot}\|_{1}^{2} \|x \|_{\infty}^{2} } \leq \sqrt{ (\sum_{i} \|A_{i\cdot}\|_{1})^{2} \|x \|_{\infty}^{2} } = \|A\|_{1,1}\|x \|_{\infty} $. Furthermore, the term $\| h_{l-1} \|_{2}$ is bounded layer by layer as:
\begin{equation}\label{lip-model2}
\begin{aligned}
\| h_{l-1} \|_{2} &= \| \sigma(\theta_{l-1}\cdots\sigma(\theta_1 x)\cdots) \|_{2} \\
&\leq \| \theta_{l-1} \sigma(\theta_{l-2} \cdots\sigma(\theta_1 x)\cdots) \|_{2} \\
&\leq \| \theta_{l-1} \|_{1,1}  \| \sigma(\theta_{l-2} \cdots\sigma(\theta_1 x)\cdots) \|_{\infty} \\
&\leq \| \theta_{l-1} \|_{1,1}  \| \sigma(\theta_{l-2} \cdots\sigma(\theta_1 x)\cdots) \|_{2} \\
&\leq \| x \|_{\infty} \prod_{k=1}^{l-1} \| \theta_{k} \|_{1,1},
\end{aligned}
\end{equation}
where the first inequality is due to the 1-Lipschitz property of $\sigma(\cdot)$ and fact that $\sigma(0)=0$, and the last inequality is by iteration. Combining \eqref{lip-model1} and \eqref{lip-model2} yields that
\begin{equation*}
\| \frac{\partial f}{\partial \theta_{l}} \|_{F} \leq \| x \|_{\infty} \prod_{k=1,k \neq l}^{L} \| \theta_{k} \|_{1,1}.
\end{equation*}
Combing the Frobenius-norm of gradients of all layers, we obtain that 
\begin{equation*}
\begin{aligned}
\| \nabla_{\theta}f_{\Theta}(x) \|_{2} &= \sqrt{ \sum_{l=1}^{L} \| \frac{\partial f}{\partial \theta_{l}} \|_{F}^{2}  } \leq \| x \|_{\infty} \sqrt{ \sum_{l=1}^{L} \prod_{k=1,k \neq l}^{L} \| \theta_{k} \|_{1,1}^{2}} \\
&\leq \| x \|_{\infty} \sqrt{ L \max_{l \in \{1,\dots,L \}} \prod_{k=1,k \neq l}^{L} \| \theta_{k} \|_{1,1}^{2}}   \\
&=  \sqrt{L} \| x \|_{\infty} \max_{l \in \{1,\dots,L \}} \prod_{k=1,k \neq l}^{L} \| \theta_{k} \|_{1,1} \\
&\leq \sqrt{L} \| x \|_{\infty} \max_{l \in \{1,\dots,L \}} \left( \frac{1}{L-1} \sum_{k=1,k \neq l}^{L} \| \theta_{k} \|_{1,1} \right)^{L-1} \\
&\leq \sqrt{L} \| x \|_{\infty} \left( \frac{1}{L-1} \sum_{k=1}^{L} \| \theta_{k} \|_{1,1} \right)^{L-1} \\
&\leq \sqrt{L} \left( \frac{r}{L-1} \right)^{L-1} \| x \|_{\infty}.\\
\end{aligned}
\end{equation*}
Furthermore, we can bound Lip$_{L^{2}(P_{n})}(x)$ as:
\begin{equation*}
\begin{aligned}
\| f_{\Theta}(x) - f_{\Theta'}(x) \|_{L^{2}(P_{n})} &= \sqrt{ \frac{1}{n} \sum_{i=1}^{n} \left(f_{\Theta}(x_{i}) - f_{\Theta'}(x_{i}) \right)^{2} } \\
&\leq \sqrt{ \frac{1}{n} \sum_{i=1}^{n} \left(\text{Lip}_{\theta,f}(x_{i}) \norm{\Theta-\Theta'}_{F} \right)^{2} } \\
&\leq \sqrt{ \frac{1}{n} \sum_{i=1}^{n} \left( \sqrt{L} \left( \frac{r}{L-1} \right)^{L-1} \| x_{i} \|_{\infty} \norm{\Theta-\Theta'}_{F} \right)^{2} } \\
&= \sqrt{L} \left( \frac{r}{L-1} \right)^{L-1} \sqrt{ \frac{1}{n} \sum_{i=1}^{n} \| x_{i} \|_{\infty}^{2} }  \ \  \norm{\Theta-\Theta'}_{F}.
\end{aligned}
\end{equation*}
The proof is complete.
\end{proof}

\begin{proof}[Proof of Lemma \ref{rademacher}]
The empirical Rademacher complexity is bounded via Dudley's integral as follows
\begin{equation}\label{dudley}
\mathcal{R}_{S}(\mathcal{F}_{r}) \leq 4\alpha + 12\int_{\alpha}^{\infty} \sqrt{ \frac{\log \mathcal{N}(\delta, \mathcal{F}_{r}, L^{2}(P_{n}) }{n} }d\delta.
\end{equation}
Now it remains to control the covering number of the function space $\mathcal{N}(\delta, \mathcal{F}_{r}, L^{2}(P_{n}) )$. Actually, the covering number of the function space $\mathcal{N}(\delta, \mathcal{F}_{r}, L^{2}(P_{n}) )$ can be bounded by the covering number of the parameter space $\mathcal{N}(\delta_{\theta}, \varTheta_{r}, \norm{\cdot}_{F} )$ via the Lipschitz property with respect to parameters. Specifically, let $\mathcal{C}_{\varTheta_{r}}$ be a $\delta_{\theta}$-cover of $\varTheta_{r}$, such that for each $\Theta \in \varTheta_{r}$, there exists $\Theta' \in \mathcal{C}_{\varTheta_{r}}$ satisfying $\norm{\Theta' - \Theta}_{F} \leq \delta_{\theta}$. According to Lemma \ref{lipschitz-parameter}, one has that
\begin{equation*}
\| f_{\Theta}(x) - f_{\Theta'}(x) \|_{L^{2}(P_{n})} \leq \text{Lip}_{L^{2}(P_{n})}(x) \norm{\Theta-\Theta'}_{F}  \leq  \delta_{\theta}\text{Lip}_{L^{2}(P_{n})}(x).
\end{equation*}
Thus the function set $\mathcal{C}_{\mathcal{F}_{r}}=\{f_{\Theta'}:\Theta' \in \mathcal{C}_{\varTheta_{r}}  \}$ is a $\delta$-cover of $\mathcal{F}_{r}$ with $\delta=\delta_{\theta}\text{Lip}_{L^{2}(P_{n})}(x)$ and it holds that
\begin{equation*}
\mathcal{N}(\delta, \mathcal{F}_{r}, L^{2}(P_{n}) ) \leq  \mathcal{N}(\delta / \text{Lip}_{L^{2}(P_{n})}(x), \varTheta_{r}, \norm{\cdot}_{F} ).
\end{equation*}
On the other hand, for the parameter set $\Theta \in \varTheta_{r}$, let $\Theta_{\text{vec}}$ denote the vector formed by the vectorized form of each parameter matrix arranging one by one. Denote the total number of parameters as $P$. Since $\| \Theta_{\text{vec}} \|_{1} = \norm{\Theta}_{1} \leq r$, the transformed parameter set is
$ \mathbb{B}_{1,r}^{P} = \{\theta \in \mathbb{R}^{P}:  \| \theta \|_{1} \leq r\} $ and 
\begin{equation*}
\mathcal{N}(\delta_{\theta}, \varTheta_{r}, \norm{\cdot}_{F} ) = \mathcal{N}(\delta_{\theta}, \mathbb{B}_{1,r}^{P}, \| \cdot \|_{2} ).
\end{equation*}
Let $g \sim N(0,\mathbb{I}_{P})$ be a $P$-dimensional normal random vevtor. It follows from Sudakov's minoration inequality that
\begin{equation*}
\begin{aligned}
\delta_{\theta} \sqrt{\log \mathcal{N}(\delta_{\theta}, \mathbb{B}_{1,r}^{P}, \| \cdot \|_{2} )} &\leq 2\mathbb{E}\sup_{\theta \in \mathbb{B}_{1,r}^{P}} \langle \theta,g \rangle \leq 2\mathbb{E} \| \theta \|_{1} \|g\|_{\infty} \\
&\leq 2r\mathbb{E} \|g\|_{\infty} \leq 2r \sqrt{2\log P}.
\end{aligned}
\end{equation*}
%Assume the number of hidden units of each layer is no larger than $d$, then $P \leq d^{2}L$ and 
%\begin{equation*}
%\delta_{\theta} \sqrt{\log \mathcal{N}(\delta_{\theta}, \mathbb{B}_{1,r}^{P}, \| \cdot \|_{2} )} \leq 2r \sqrt{4\log(d)+2\log(L)}.
%\end{equation*}
Noting from \eqref{dudley}, we still need to bound the superior absolute value of $f_{\Theta} \in \mathcal{F}_{r}$ before calculating the Dudley's integral. According to \eqref{lip-model2} and Assumption \ref{bounded-inputs},
\begin{equation}\label{sup-f}
\sup_{x \in \mathcal{X}} | f_{\Theta}(x) | \leq \sup_{x \in \mathcal{X}} \| x \|_{\infty} \prod_{k=1}^{L} \| \theta_{k} \|_{1,1} \leq R\left(\frac{1}{L} \sum_{k=1}^{L} \| \theta_{k} \|_{1,1}\right)^{L} \leq R\left(\frac{r}{L}\right)^{L}.
\end{equation}
Then the Dudley's integral \eqref{dudley} is bounded as follows
\begin{equation}\label{eq-rsfr}
\begin{aligned}
\mathcal{R}_{S}(\mathcal{F}_{r}) &\leq 4\alpha + 12\int_{\alpha}^{\sup| f_{\Theta}(x) |} \sqrt{ \frac{\log \mathcal{N}(\delta, \mathcal{F}_{r}, L^{2}(P_{n}) }{n} }d\delta \\
&\leq 4\alpha + 12\int_{\alpha}^{\sup| f_{\Theta}(x) |} \sqrt{ \frac{\log \mathcal{N}(\delta / \text{Lip}_{L^{2}(P_{n})}(x), \mathbb{B}_{1,r}^{P}, \| \cdot \|_{2} ) }{n} }d\delta \\
&\leq  4\alpha + 24r\text{Lip}_{L^{2}(P_{n})}(x) \sqrt{\frac{2\log P}{n}}\int_{\alpha}^{\sup| f_{\Theta}(x) |} \frac{1}{\delta}d\delta \\
&\leq  4\alpha + 24r\text{Lip}_{L^{2}(P_{n})}(x) \sqrt{\frac{2\log P}{n}} \log(\frac{\sup| f_{\Theta}(x) |}{\alpha}).
\end{aligned}
\end{equation}
Substituting $\text{Lip}_{L^{2}(P_{n})}(x)$ in Lemma \ref{lipschitz-parameter} (cf. \eqref{eq-LipL2}) into \eqref{eq-rsfr}, one sees that the right-hand side of \eqref{eq-rsfr} is minimized with respect to $\alpha$ by setting
\begin{equation*}
\alpha = 6r \sqrt{L} \left( \frac{r}{L-1} \right)^{L-1} \sqrt{ \frac{1}{n} \sum_{i=1}^{n} \| x_{i} \|_{\infty}^{2} } \sqrt{\frac{2\log P}{n}},
\end{equation*}
For simplicity, we ignore the data dependent term by setting
\begin{equation*}
\alpha = 6r \sqrt{L} \left( \frac{r}{L-1} \right)^{L-1} \sqrt{\frac{2\log P}{n}}.
\end{equation*}
Then combining \eqref{sup-f} and \eqref{eq-rsfr}, we finally obtain that
\begin{equation*}
\mathcal{R}_{S}(\mathcal{F}_{r}) \leq 24r \left( \frac{r}{L-1} \right)^{L-1} \sqrt{\frac{2L\log P}{n}} \left(1+\log(c_{1}\sqrt{n}) \sqrt{ \frac{1}{n} \sum_{i=1}^{n} \| x_{i} \|_{\infty}^{2} }  \right),
\end{equation*}
where
\begin{equation*}
c_{1} = \frac{R}{6rL^{\frac{3}{2}}\sqrt{2\log P}}.
\end{equation*}
In addition, the expectation of $\mathcal{R}_{S}(\mathcal{F}_{r})$ is bounded as
\begin{equation*}
\begin{aligned}
\mathcal{R}_{n}(\mathcal{F}_{r}) &= \E[\mathcal{R}_{S}(\mathcal{F}_{r})] \\
&\leq 24r \left( \frac{r}{L-1} \right)^{L-1} \sqrt{\frac{2L\log P}{n}} \left(1+\log(c_{1}\sqrt{n}) \sqrt{\E \| x \|_{\infty}^{2} }  \right).
\end{aligned}
\end{equation*}
The proof is complete.
\end{proof}

\section{Proof of Section \ref{sec-derivative}}\label{sec-appB}

\begin{proof}[Proof of Lemma \ref{lipschitz-input}]
It follows from the chain rule that the partial derivatives with respect to inputs equal to
\begin{equation*}
\nabla_{x}f_{\Theta}(x) = \theta_{1}^{\top}  h'_{1} \odot \theta_{2}^{\top} \cdots \theta_{L-1}^{\top} h'_{L-1} \odot \theta_{L}^{\top}.
\end{equation*}
Then the quantity $\| \nabla_{x}f_{\Theta}(x) \|_{1}$ is bounded layer by layer as follows
\begin{equation*}
\begin{aligned}
\| \nabla_{x}f_{\Theta}(x) \|_{1} &= \| \theta_{1}^{\top}  h'_{1} \odot \theta_{2}^{\top} \cdots \theta_{L-1}^{\top} h'_{L-1} \odot \theta_{L}^{\top} \|_{1} \\
& \leq \| \theta_{1} \|_{1,1}   \| h'_{1} \odot \theta_{2}^{\top} \cdots \theta_{L-1}^{\top} h'_{L-1} \odot \theta_{L}^{\top} \|_{\infty} \\
& \leq \| \theta_{1} \|_{1,1}   \| h'_{1} \odot \theta_{2}^{\top} \cdots \theta_{L-1}^{\top} h'_{L-1} \odot \theta_{L}^{\top} \|_{1} \\
& \leq \| \theta_{1} \|_{1,1}   \| \theta_{2}^{\top}  h'_{2} \cdots \theta_{L-1}^{\top} h'_{L-1} \odot \theta_{L}^{\top} \|_{1} \\
& \leq \prod_{k=1}^{L} \| \theta_{k} \|_{1,1} \leq \left(\frac{1}{L} \sum_{k=1}^{L} \| \theta_{k} \|_{1,1}\right)^{L} \\
& \leq \left(\frac{r}{L}\right)^{L},
\end{aligned}
\end{equation*}
where the first inequality follows from the fact that $\|Ax\|_{1} = \sum_{i} | \langle A_{i\cdot},x \rangle | \leq \sum_{i} \|A_{i\cdot}\|_{1} \|x \|_{\infty}$\\$=\|A\|_{1,1}\|x \|_{\infty} $. 

\end{proof}

\begin{proof}[Proof of Lemma \ref{bounded-divergence}]
For a deep neural network $f_\Theta(x)=\theta_L\sigma(\theta_{L-1}\cdots\sigma(\theta_1 x)\cdots)$, denote the activations of the $l$-th hidden layer as $h_{l}=\sigma(\theta_{l}\cdots\sigma(\theta_1 x)\cdots)$ with $h_{0}(x)=x$. Then the derivatives of the $l$-th hidden layer is $h'_{l}=\sigma'(\theta_{l}\cdots\sigma(\theta_1 x)\cdots)$, where $\sigma'(\cdot)$ is the derivative of the activation function. It follows that
\begin{equation*}
\begin{aligned}
\nabla_{x}f_{\Theta}(x) = \theta_{1}^{\top}  h'_{1} \odot \theta_{2}^{\top} \cdots \theta_{L-1}^{\top} h'_{L-1} \odot \theta_{L}^{\top}.
\end{aligned}
\end{equation*}
Recall that ${\theta_{1}}_{\cdot i}$ denote the $i$-th column of $\theta_{1}$. Then it follows that
\begin{equation*}
\begin{aligned}
\Delta_{x}f_{\Theta}(x) =\sum_{i=1}^{d}\nabla_{x_{i}}\nabla_{x_{i}}f_{\Theta}(x), \quad \nabla_{x_{i}}f_{\Theta}(x) = \theta_{1\cdot i}^{\top}  h'_{1} \odot \theta_{2}^{\top} \cdots \theta_{L-1}^{\top} h'_{L-1} \odot \theta_{L}^{\top}.
\end{aligned}
\end{equation*}
According to the chain rule, we obtain that
\begin{equation*}
\begin{aligned}
\nabla_{x_{i}}\nabla_{x_{i}}f_{\Theta}(x) = \sum_{k=1}^{L-1} \left\langle \frac{\partial \nabla_{x_{i}}f_{\Theta}(x)}{\partial h'_{k}}, \frac{\partial h'_{k}}{\partial x_{i}} \right\rangle,
\end{aligned}
\end{equation*}
and
\begin{equation*}
\begin{aligned}
\frac{\partial \nabla_{x_{i}}f_{\Theta}(x)}{\partial h'_{k}} = \left(\theta_{1\cdot i} \odot h'_{1}  \theta_{2} \cdots h'_{k-1}  \theta_{k} \right) \odot \left( \theta_{k+1}^{\top}  h'_{k+1} \cdots \theta_{L-1}^{\top} h'_{L-1} \odot \theta_{L}^{\top} \right).
\end{aligned}
\end{equation*}
Denote $h''_{k}=\sigma''(\theta_{k}\cdots\sigma(\theta_1 x)\cdots)$ and assume the $k$-th layer has $h$ hidden units, and we use $\theta_{kj\cdot}$ to denote the $j$-th row of $\theta_{k}$,
\begin{equation*}
\begin{aligned}
\frac{\partial h'_{k}}{\partial x_{i}} = ( (\nabla_{x_{i}}h'_{k})_{1}, \cdots ,(\nabla_{x_{i}}h'_{k})_{h} )^{\top},
\end{aligned}
\end{equation*}

\begin{equation*}
\begin{aligned}
(\nabla_{x_{i}}h'_{k})_{j} = h''_{kj}  \theta_{kj\cdot}^{\top} \odot h'_{k-1}  \theta_{k-1}^{\top} \cdots \odot h'_{1}  \theta_{1\cdot i}^{\top}.
\end{aligned}
\end{equation*}
%Denote $\text{diag}(h''_{k})$ to be a matrix with diagonal elements being $h''_{k}$, the above equation can also be represented as
%\begin{equation*}
%\begin{aligned}
%\frac{\partial h'_{k}}{\partial x_{i}} &= h''_{k} \odot \theta_{k}^{\top} \odot h'_{k-1} \cdot \theta_{k-1}^{\top} \cdots \odot h'_{1} \cdot \theta_{1\cdot i}^{\top} \\
%&= \text{diag}(h''_{k}) \cdot \theta_{k}^{\top} \odot h'_{k-1} \cdot \theta_{k-1}^{\top} \cdots \odot h'_{1} \cdot \theta_{1\cdot i}^{\top}.
%\end{aligned}
%\end{equation*}
Then the divergence of partial derivatives is bounded as follows
\begin{equation}\label{divergence-ineq1}
| \Delta_{x}f_{\Theta}(x) | = \left|\sum_{i=1}^{d} \sum_{k=1}^{L-1} \left\langle \frac{\partial \nabla_{x_{i}}f_{\Theta}(x)}{\partial h'_{k}}, \frac{\partial h'_{k}}{\partial x_{i}} \right\rangle \right| \leq \sum_{i=1}^{d} \sum_{k=1}^{L-1}  \left\| \frac{\partial \nabla_{x_{i}}f_{\Theta}(x)}{\partial h'_{k}}\right\|_{2} \left\| \frac{\partial h'_{k}}{\partial x_{i}}\right\|_{2}.
\end{equation}
By the fact that $\|a \odot b\|_{2} =\sqrt{ \sum_{i}a_{i}^{2}b_{i}^{2} } \leq \sqrt{(\sum_{i}a_{i}^{2})(\sum_{i}b_{i}^{2})} = \|a\|_{2} \|b\|_{2}$ for two vectors $a,b$ with the same dimension, we have that
\begin{equation*}
\begin{aligned}
\left\| \frac{\partial \nabla_{x_{i}}f_{\Theta}(x)}{\partial h'_{k}}\right\|_{2} \leq  \| \theta_{1\cdot i} \odot h'_{1}  \theta_{2} \cdots h'_{k-1}  \theta_{k} \|_{2}  \| \theta_{k+1}^{\top}  h'_{k+1} \cdots \theta_{L-1}^{\top} h'_{L-1} \odot \theta_{L}^{\top} \|_{2}.
\end{aligned}
\end{equation*}
Meanwhile, the right-hand side of the former inequality is bounded by
\begin{equation*}
\begin{aligned}
\| \theta_{1\cdot i} \odot h'_{1}  \theta_{2} \cdots h'_{k-1}  \theta_{k} \|_{2} &\leq \| \theta_{1\cdot i} \odot h'_{1}  \theta_{2} \cdots \theta_{k-1}\odot h'_{k-1} \|_{\infty}   \| \theta_{k} \|_{1,1} \\
&\leq \| \theta_{1\cdot i} \odot h'_{1}  \theta_{2} \cdots \theta_{k-1} \|_{\infty}   \| \theta_{k} \|_{1,1} \\
&\leq \| \theta_{1\cdot i} \odot h'_{1}  \theta_{2} \cdots \theta_{k-1} \|_{2}   \| \theta_{k} \|_{1,1} \\
&\leq \| \theta_{1\cdot i}\|_{\infty} \prod_{q=2}^{k}\| \theta_{q} \|_{1,1} \leq \| \theta_{1\cdot i}\|_{1} \prod_{q=2}^{k}\| \theta_{q} \|_{1,1},
\end{aligned}
\end{equation*}
and
\begin{equation*}
\begin{aligned}
\| \theta_{k+1}^{\top}  h'_{k+1} \cdots \theta_{L-1}^{\top} h'_{L-1} \odot \theta_{L}^{\top} \|_{2} &\leq \| \theta_{k+1} \|_{1,1} \| h'_{k+1} \odot \theta_{k+2}^{\top} \cdots \theta_{L-1}^{\top} h'_{L-1} \odot \theta_{L}^{\top} \|_{\infty} \\
&\leq \| \theta_{k+1} \|_{1,1} \| \theta_{k+2}^{\top} \cdots \theta_{L-1}^{\top} h'_{L-1} \odot \theta_{L}^{\top} \|_{2} \\
&\leq \prod_{q=k+1}^{L}\| \theta_{q} \|_{1,1}.
\end{aligned}
\end{equation*}
Then it follows that
\begin{equation}\label{divergence-ineq2}
\left\| \frac{\partial \nabla_{x_{i}}f_{\Theta}(x)}{\partial h'_{k}}\right\|_{2} \leq \| \theta_{1\cdot i}\|_{1} \prod_{q=2}^{k}\| \theta_{q} \|_{1,1} \prod_{q=k+1}^{L}\| \theta_{q} \|_{1,1} = \| \theta_{1\cdot i}\|_{1} \prod_{q=2}^{L}\| \theta_{q} \|_{1,1}.
\end{equation}
Since we adopt softplus activation function $\sigma(\cdot)$, $\sigma'(\cdot)$ is sigmoid function and $\sigma''(\cdot) = \sigma'(\cdot)(1-\sigma'(\cdot))$, thus $\sigma''(\cdot) \in (0,1/4)$. Denote $\text{diag}(h'_{k})$ to be a matrix with diagonal elements equaling to $h'_{k}$. Then the quantity $(\nabla_{x_{i}}h'_{k})_{j}$ is bounded as follows
\begin{equation*}
\begin{aligned}
|(\nabla_{x_{i}}h'_{k})_{j}| &= | h''_{kj}  \theta_{kj\cdot}^{\top} \odot h'_{k-1}  \theta_{k-1}^{\top} \cdots \odot h'_{1}  \theta_{1\cdot i}^{\top} | \\
&= | h''_{kj}  \theta_{kj\cdot}^{\top}  \text{diag}(h'_{k-1})  \theta_{k-1}^{\top} \cdots \text{diag}(h'_{1})  \theta_{1\cdot i}^{\top} | \\
&\leq \frac{1}{4} \| \theta_{kj\cdot}^{\top}  \text{diag}(h'_{k-1}) \|_{1}  \| \theta_{k-1}^{\top}  \text{diag}(h'_{k-2}) \cdots \text{diag}(h'_{1})  \theta_{1\cdot i}^{\top} \|_{\infty} \\
&\leq \frac{1}{4} \| \theta_{kj\cdot} \|_{1}  \| \theta_{k-1}^{\top}  \text{diag}(h'_{k-2}) \cdots \text{diag}(h'_{1})  \theta_{1\cdot i}^{\top} \|_{2} \\
&\leq \frac{1}{4} \| \theta_{kj\cdot} \|_{1}  \| \theta_{k-1}^{\top}  \text{diag}(h'_{k-2})\|_{1,1} \|\theta_{k-2}^{\top}  \text{diag}(h'_{k-3}) \cdots  \theta_{1\cdot i}^{\top} \|_{\infty} \\
&\leq \frac{1}{4} \| \theta_{kj\cdot} \|_{1}  \| \theta_{k-1}\|_{1,1} \|\theta_{k-2}^{\top}  \text{diag}(h'_{k-3}) \cdots  \theta_{1\cdot i}^{\top} \|_{2} \\
&\leq \frac{1}{4} \| \theta_{kj\cdot} \|_{1}  \prod_{q=2}^{k-1} \| \theta_{q}\|_{1,1} \|\theta_{1\cdot i}^{\top} \|_{\infty} \leq \frac{1}{4} \| \theta_{kj\cdot} \|_{1} \|\theta_{1\cdot i} \|_{1}  \prod_{q=2}^{k-1} \| \theta_{q}\|_{1,1} .
\end{aligned}
\end{equation*}
Then it follows that
\begin{equation}\label{divergence-ineq3}
\begin{aligned}
\left\| \frac{\partial h'_{k}}{\partial x_{i}}\right\|_{2} &= \sqrt{\sum_{j=1}^{h}(\nabla_{x_{i}}h'_{k})_{j}^{2}} \leq \sqrt{ \sum_{j=1}^{h} \frac{1}{16} \| \theta_{kj\cdot} \|_{1}^{2} \|\theta_{1\cdot i} \|_{1}^{2}  \prod_{q=2}^{k-1} \| \theta_{q}\|_{1,1}^{2} } \\
&\leq \sqrt{ \left(\sum_{j=1}^{h} \| \theta_{kj\cdot} \|_{1}\right)^{2} \frac{1}{16} \|\theta_{1\cdot i} \|_{1}^{2}  \prod_{q=2}^{k-1} \| \theta_{q}\|_{1,1}^{2} } \\
&=\frac{1}{4} \|\theta_{1\cdot i} \|_{1}  \prod_{q=2}^{k} \| \theta_{q}\|_{1,1}.
\end{aligned}
\end{equation}
Combing \eqref{divergence-ineq1},\eqref{divergence-ineq2} and \eqref{divergence-ineq3}, we finally arrive at that
\begin{equation*}
\begin{aligned}
| \Delta_{x}f_{\Theta}(x) | &\leq \sum_{i=1}^{d} \sum_{k=1}^{L-1}  (\| \theta_{1\cdot i}\|_{1} \prod_{q=2}^{L}\| \theta_{q} \|_{1,1}) ( \frac{1}{4} \|\theta_{1\cdot i} \|_{1}  \prod_{q=2}^{k} \| \theta_{q}\|_{1,1}) \\
&=\frac{1}{4}\prod_{q=2}^{L}\| \theta_{q} \|_{1,1} \left(\sum_{i=1}^{d} \| \theta_{1\cdot i}\|_{1}^{2}\right) \left(\sum_{k=1}^{L-1}  \prod_{q=2}^{k} \| \theta_{q}\|_{1,1}\right) \\
&\leq \frac{1}{4}\prod_{q=2}^{L}\| \theta_{q} \|_{1,1} \left(\sum_{i=1}^{d} \| \theta_{1\cdot i}\|_{1}\right)^{2} \left(\sum_{k=1}^{L-1}  \prod_{q=2}^{k} \| \theta_{q}\|_{1,1}\right) \\
&= \frac{1}{4}\prod_{q=2}^{L}\| \theta_{q} \|_{1,1} (\|\theta_{1}\|_{1,1}^{2}) \left(\sum_{k=2}^{L-1}  \prod_{q=2}^{k} \| \theta_{q}\|_{1,1}\right) \\
&= \frac{1}{4}\prod_{q=1}^{L}\| \theta_{q} \|_{1,1} \left(\sum_{k=2}^{L-1}  \prod_{q=1}^{k} \| \theta_{q}\|_{1,1}\right) \\
&\leq \frac{1}{4}\left(\frac{r}{L}\right)^{L} \left((L-2)\max_{k\in\{2,\cdots,L-1\}} \prod_{q=1}^{k} \| \theta_{q}\|_{1,1} \right)\\
&\leq \frac{L}{4}\left(\frac{r}{L}\right)^{L} \max_{k\in\{2,\cdots,L-1\}} \left( \frac{r}{k} \right)^{k}.
\end{aligned}
\end{equation*}
The proof is complete.
\end{proof}

\begin{proof}[Proof of Lemma \ref{green-lemma}]
The proof mainly follows \cite[Lemma A1]{RN357}, and we provide it here for the completeness of this article. By the definition of the inner product, one has that
\begin{equation*}
\begin{aligned}
&-\langle \nabla_{x}f,\nabla_{x}g \rangle_{L^{2}} = - \int_{\mathcal{X}}(\nabla_{x}f \cdot \nabla_{x}g)p_x(x)dx \\
&=- \int_{\mathcal{X}}\nabla_{x} \cdot ((\nabla_{x}f )gp_x(x))dx + \int_{\mathcal{X}}\nabla_{x} \cdot ((\nabla_{x}f )p_x(x))gdx \quad (\text{via integral by part})\\
&=- \int_{\partial \mathcal{X}} (\nabla_{x}f \cdot \vec{n}) gp_x(x)ds + \int_{\mathcal{X}}\nabla_{x} \cdot ((\nabla_{x}f )p_x(x))gdx \quad (\text{via divergence theorem})\\
&=0+\int_{\mathcal{X}} (\nabla_{x} \cdot \nabla_{x}f) g p_x(x)dx+\int_{\mathcal{X}} (\nabla_{x}f \cdot \nabla_{x}p_x(x)) g dx \quad (\because \nabla_{x}f\cdot\vec{n}=0)\\
&=\int_{\mathcal{X}} (\Delta_{x}f) g p_x(x)dx+\int_{\mathcal{X}} (\nabla_{x}f \cdot \nabla_{x}\log p_x(x)) g p_x(x)dx \\
&= \langle \Delta_{x}f+\nabla_{x}f \cdot \nabla_{x} \log p_x(x), g \rangle_{L^{2}}.
\end{aligned}
\end{equation*}
\end{proof}

\begin{proof}[Proof of Theorem \ref{convergence-gradient}]
By the linearity of the inner product, we have that
\begin{equation}\label{eq-thm2-1}
\begin{aligned}
\|\nabla_{x}\hat{f} - \nabla_{x}f_{0}\|_{L^{2}}^{2} &= \langle \nabla_{x}\hat{f} - \nabla_{x}f_{0}, \nabla_{x}\hat{f} - \nabla_{x}f_{0} \rangle _{L^{2}} \\
&=\langle \nabla_{x}\hat{f}, \nabla_{x}\hat{f} - \nabla_{x}f_{0} \rangle _{L^{2}} -\langle \nabla_{x}f_{0}, \nabla_{x}\hat{f} - \nabla_{x}f_{0} \rangle _{L^{2}}.
\end{aligned}
\end{equation}
It follows from Assumption \ref{boundary-assumption} and Lemma \ref{green-lemma} that 
\begin{equation*}
\langle \nabla_{x}\hat{f}, \nabla_{x}\hat{f} - \nabla_{x}f_{0} \rangle _{L^{2}} = - \langle \Delta_{x}\hat{f}+\nabla_{x}\hat{f} \cdot \nabla_{x} \log p_x(x), \hat{f}-f_{0} \rangle_{L^{2}},
\end{equation*}
\begin{equation*}
-\langle \nabla_{x}f_{0}, \nabla_{x}\hat{f} - \nabla_{x}f_{0} \rangle _{L^{2}} =  \langle \Delta_{x}f_{0}+\nabla_{x}f_{0} \cdot \nabla_{x} \log p_x(x), \hat{f}-f_{0} \rangle_{L^{2}}.
\end{equation*}
Combining the former two equalities with \eqref{eq-thm2-1} yields that
\begin{equation}\label{convergence-derivative1}
\begin{aligned}
\|\nabla_{x}\hat{f} - \nabla_{x}f_{0}\|_{L^{2}}^{2} &= \langle \Delta_{x}f_{0} - \Delta_{x}\hat{f}, \hat{f}-f_{0} \rangle_{L^{2}} \\
&+ \langle (\nabla_{x}f_{0}-\nabla_{x}\hat{f}) \cdot \nabla_{x} \log p_x(x), \hat{f}-f_{0} \rangle_{L^{2}}.
\end{aligned}
\end{equation}
%On the other hand, using Lemma \ref{bounded-divergence} to bound the divergence of partial derivatives, we have that for $\Theta \in \varTheta_{r}$,
%\begin{equation*}
%\begin{aligned}
%\| \Delta_{x}f_{\Theta} \|_{L^{2}}^{2} = \int_{\mathcal{X}}
%| \Delta_{x}f_{\Theta}(x) |^{2} p_x(x) dx &\leq \int_{\mathcal{X}} \frac{L^{2}}{16}\left(\frac{r}{L}\right)^{2L} \max_{k\in\{2,\cdots,L-1\}} \left( \frac{r}{k} \right)^{2k} p_x(x) dx \\
%&=\frac{L^{2}}{16}\left(\frac{r}{L}\right)^{2L} \max_{k\in\{2,\cdots,L-1\}} \left( \frac{r}{k} \right)^{2k}.
%\end{aligned}
%\end{equation*}
Set $\lambda = 1/ \sqrt{n}$. Then the first term in the right-hand side of \eqref{convergence-derivative1} is bounded as 
\begin{equation*}%\label{convergence-derivativeF}
\begin{aligned}
\langle \Delta_{x}f_{0} - \Delta_{x}\hat{f}, \hat{f}-f_{0} \rangle_{L^{2}} &\leq \| \Delta_{x}f_{0} - \Delta_{x}\hat{f} \|_{L^{2}}  \| \hat{f}-f_{0} \|_{L^{2}} \\
&\leq \frac{\sqrt{\lambda}}{2}\| \Delta_{x}f_{0} - \Delta_{x}\hat{f} \|_{L^{2}}^{2} + \frac{1}{2\sqrt{\lambda}}\| \hat{f}-f_{0} \|_{L^{2}}^{2} \\
&\leq \frac{\sqrt{\lambda}}{2} (\| \Delta_{x}f_{0} \|_{L^{2}} + \| \Delta_{x}\hat{f} \|_{L^{2}})^{2} + \frac{1}{2\sqrt{\lambda}}\| \hat{f}-f_{0} \|_{L^{2}}^{2} \\
&\leq \sqrt{\lambda} (\| \Delta_{x}f_{0} \|_{L^{2}}^{2} + \| \Delta_{x}\hat{f} \|_{L^{2}}^{2}) + \frac{1}{2\sqrt{\lambda}}\| \hat{f}-f_{0} \|_{L^{2}}^{2}.
\end{aligned}
\end{equation*}
On the other hand, using Lemma \ref{bounded-divergence} to bound the divergence of partial derivatives, we have that for $\Theta \in \varTheta_{r}$,
\begin{equation*}
\begin{aligned}
\| \Delta_{x}f_{\Theta} \|_{L^{2}}^{2} = \int_{\mathcal{X}}
| \Delta_{x}f_{\Theta}(x) |^{2} p_x(x) dx &\leq \int_{\mathcal{X}} \frac{L^{2}}{16}\left(\frac{r}{L}\right)^{2L} \max_{k\in\{2,\cdots,L-1\}} \left( \frac{r}{k} \right)^{2k} p_x(x) dx \\
&=\frac{L^{2}}{16}\left(\frac{r}{L}\right)^{2L} \max_{k\in\{2,\cdots,L-1\}} \left( \frac{r}{k} \right)^{2k}.
\end{aligned}
\end{equation*}
Combining the former two inequalities, we obtain that
\begin{equation}\label{convergence-derivativeF}
\begin{aligned}
\langle \Delta_{x}f_{0} - \Delta_{x}\hat{f}, \hat{f}-f_{0} \rangle_{L^{2}} \leq \frac{\sqrt{\lambda}L^{2}}{8}\left(\frac{r}{L}\right)^{2L} \max_{k\in\{2,\cdots,L-1\}} \left( \frac{r}{k} \right)^{2k} + \frac{1}{2\sqrt{\lambda}}\| \hat{f}-f_{0} \|_{L^{2}}^{2}.
\end{aligned}
\end{equation}
For the second term in the right-hand side of \eqref{convergence-derivative1}, one has that
\begin{equation}\label{convergence-derivative2}
\begin{aligned}
&\langle (\nabla_{x}f_{0}-\nabla_{x}\hat{f}) \cdot \nabla_{x} \log p_x(x), \hat{f}-f_{0} \rangle_{L^{2}} \\
&= \int_{\mathcal{X}} \left( (\nabla_{x}f_{0}(x)-\nabla_{x}\hat{f}(x)) \cdot \nabla_{x} \log p_x(x) \right) (\hat{f}(x)-f_{0}(x)) p_x(x)dx \\
&\leq \int_{\mathcal{X}} \| \nabla_{x}f_{0}(x)-\nabla_{x}\hat{f}(x) \|_{1}  \| \nabla_{x} \log p_x(x)\|_{\infty} (\hat{f}(x)-f_{0}(x)) p_x(x)dx \\
&=\langle \| \nabla_{x}f_{0}(x)-\nabla_{x}\hat{f}(x) \|_{1}, \| \nabla_{x} \log p_x(x)\|_{\infty} (\hat{f}(x)-f_{0}(x)) \rangle_{L^{2}} \\
%&\leq \| \nabla_{x}f_{0}(x)-\nabla_{x}\hat{f}(x) \|_{1,L^{2}} \cdot \| \| \nabla_{x} \log p_x(x)\|_{\infty} (\hat{f}(x)-f_{0}(x)) \|_{L^{2}} \\
&\leq \left\| \| \nabla_{x}f_{0}(x)-\nabla_{x}\hat{f}(x) \|_{1} \right\|_{L^{2}}  \left\| \| \nabla_{x} \log p_x(x)\|_{\infty} (\hat{f}(x)-f_{0}(x)) \right\|_{L^{2}} \\
&\leq \frac{\sqrt{\lambda}}{2}\left\| \| \nabla_{x}f_{0}(x)-\nabla_{x}\hat{f}(x) \|_{1} \right\|_{L^{2}}^{2} + \frac{1}{2\sqrt{\lambda}} \left\| \| \nabla_{x} \log p_x(x)\|_{\infty} (\hat{f}(x)-f_{0}(x)) \right\|_{L^{2}}^{2}.
\end{aligned}
\end{equation}
Using Lemma \ref{lipschitz-input} to bound the $\ell_{1}$-norm of partial derivatives, one sees that the first term in the right-hand side of \eqref{convergence-derivative2} can be bounded as
\begin{equation}\label{convergence-derivative3}
\begin{aligned}
\left\| \| \nabla_{x}f_{0}(x)-\nabla_{x}\hat{f}(x) \|_{1} \right\|_{L^{2}}^{2} &\leq \left\| \|\nabla_{x}f_{0}(x)\|_{1} + \|\nabla_{x}\hat{f}(x) \|_{1} \right\|_{L^{2}}^{2} \\
&= \int_{\mathcal{X}} (\|\nabla_{x}f_{0}(x)\|_{1} + \|\nabla_{x}\hat{f}(x) \|_{1})^{2} p_x(x)dx \\
&\leq \int_{\mathcal{X}} (2(r/L)^{L})^{2} p_x(x)dx \\
&=4\left(\frac{r}{L}\right)^{2L}.
\end{aligned}
\end{equation}
For the second term in the right-hand side of \eqref{convergence-derivative2}, recalling the $L^{\infty}$-norm given by $\|g\|_{L^{\infty}} = \sup\{g(x): x \in \mathcal{X}\}$, we obtain that
\begin{equation}\label{convergence-derivative4}
\begin{aligned}
&\left\| \| \nabla_{x} \log p_x(x)\|_{\infty} (\hat{f}(x)-f_{0}(x)) \right\|_{L^{2}}^{2} \\
&=\int_{\mathcal{X}} \| \nabla_{x} \log p_x(x)\|_{\infty}^{2} (\hat{f}(x)-f_{0}(x))^{2} p_x(x)dx \\
%&= \left \langle \| \nabla_{x} \log p_x(x)\|_{\infty}^{2}, (\hat{f}(x)-f_{0}(x))^{2}  \right \rangle _{L_{2}} \\
& \leq \left\| \| \nabla_{x} \log p_x(x)\|_{\infty}^{2} \right\|_{L^{\infty}} \int_{\mathcal{X}} (\hat{f}(x)-f_{0}(x))^{2} p_x(x)dx \\
& = \left\| \| \nabla_{x} \log p_x(x)\|_{\infty}^{2} \right\|_{L^{\infty}} \|\hat{f}-f_{0}\|_{L^{2}}^{2} \\
& \leq b_{1}^{2}\|\hat{f}-f_{0}\|_{L^{2}}^{2}.
\end{aligned}
\end{equation}
where the last inequality is due to Assumption \ref{bounded-density} that bounds the superior value of derivatives of the probability density. Combining \eqref{convergence-derivative2},\eqref{convergence-derivative3} and \eqref{convergence-derivative4}, it follows that
\begin{equation}\label{convergence-derivativeS}
\langle (\nabla_{x}f_{0}-\nabla_{x}\hat{f}) \cdot \nabla_{x} \log p_x(x), \hat{f}-f_{0} \rangle_{L^{2}} \leq 2\sqrt{\lambda}\left(\frac{r}{L}\right)^{2L} + \frac{b_{1}^{2}}{2\sqrt{\lambda}}\|\hat{f}-f_{0}\|_{L^{2}}^{2}.
\end{equation}
Combining \eqref{convergence-derivative1},\eqref{convergence-derivativeF} and \eqref{convergence-derivativeS}, we have that
\begin{equation*}
\begin{aligned}
\|\nabla_{x}\hat{f} - \nabla_{x}f_{0}\|_{L^{2}}^{2} &\leq \sqrt{\lambda}\left(\frac{r}{L}\right)^{2L}\left(2+ \frac{L^{2}}{8} \max_{k\in\{2,\cdots,L-1\}} \left( \frac{r}{k} \right)^{2k}\right) \\
&+ \frac{1+b_{1}^{2}}{2\sqrt{\lambda}}\| \hat{f}-f_{0} \|_{L^{2}}^{2}.
\end{aligned}
\end{equation*}
Taking expectations with respect to the dataset $\D$ and plugging $\lambda = 1/ \sqrt{n}$ into the former inequality, we finally obtain by Theorem \ref{model-convergence} that
\begin{equation*}
\begin{aligned}
&\E_{\D} \|\nabla_{x}\hat{f} - \nabla_{x}f_{0}\|_{L^{2}}^{2} \leq \frac{1}{n^{1/4}}\left(\frac{r}{L}\right)^{2L}\left(2+ \frac{L^{2}}{8} \max_{k\in\{2,\cdots,L-1\}} \left( \frac{r}{k} \right)^{2k}\right) \\
&+  48(1+b_{1}^{2})b_{0}r \left( \frac{r}{L-1} \right)^{L-1} \frac{\sqrt{2L\log P}}{n^{1/4}} \left(1+\log(c_{1}\sqrt{n}) \sqrt{\E \| x \|_{\infty}^{2} }  \right).
\end{aligned}
\end{equation*}
The proof is complete.
\end{proof}
\end{appendix}

%%%%%%%%%%%%%%%%%%%%%%%%%%%%%%%%%%%%%%%%%%%%%%
%% Example with single Appendix:            %%
%%%%%%%%%%%%%%%%%%%%%%%%%%%%%%%%%%%%%%%%%%%%%%
%\begin{appendix}
%\section*{Title}\label{appn} %% if no title is needed, leave empty \section*{}.
%Appendices should be provided in \verb|{appendix}| environment,
%before Acknowledgements.
%
%If there is only one appendix,
%then please refer to it in text as \ldots\ in the \hyperref[appn]{Appendix}.
%\end{appendix}
%%%%%%%%%%%%%%%%%%%%%%%%%%%%%%%%%%%%%%%%%%%%%%
%% Example with multiple Appendixes:        %%
%%%%%%%%%%%%%%%%%%%%%%%%%%%%%%%%%%%%%%%%%%%%%%
%\begin{appendix}
%\section{Title of the first appendix}\label{appA}
%If there are more than one appendix, then please refer to it
%as \ldots\ in Appendix \ref{appA}, Appendix \ref{appB}, etc.
%
%\section{Title of the second appendix}\label{appB}
%\subsection{First subsection of Appendix \protect\ref{appB}}
%
%Use the standard \LaTeX\ commands for headings in \verb|{appendix}|.
%Headings and other objects will be numbered automatically.
%\begin{equation}
%\mathcal{P}=(j_{k,1},j_{k,2},\dots,j_{k,m(k)}). \label{path}
%\end{equation}
%
%Sample of cross-reference to the formula (\ref{path}) in Appendix \ref{appB}.
%\end{appendix}

%%%%%%%%%%%%%%%%%%%%%%%%%%%%%%%%%%%%%%%%%%%%%%
%% Support information, if any,             %%
%% should be provided in the                %%
%% Acknowledgements section.                %%
%%%%%%%%%%%%%%%%%%%%%%%%%%%%%%%%%%%%%%%%%%%%%%
\begin{acks}[Acknowledgments]
The authors would like to thank the anonymous referees, an Associate Editor and the Editor for their constructive comments that improved the
quality of this paper.
\end{acks}

%%%%%%%%%%%%%%%%%%%%%%%%%%%%%%%%%%%%%%%%%%%%%%
%% Funding information, if any,             %%
%% should be provided in the                %%
%% funding section.                         %%
%%%%%%%%%%%%%%%%%%%%%%%%%%%%%%%%%%%%%%%%%%%%%%
\begin{funding}
The first author was supported by the Natural Science Foundation of China (Grant No. 62103329). 
The second author was supported by the Natural Science Foundation of China (Grant No. 12201496).
\end{funding}

\bibliographystyle{imsart-nameyear.bst}
\bibliography{./arxiv-240609.bbl}

\end{document}